\newcommand{\base}[1]{\ddot{#1}}
\newcommand{\basewt}[1]{\ddot{#1}_{t}}
\newcommand{\rec}[1]{\vec{#1}}
\theoremstyle{plain}
\newtheorem{theorem}{Theorem}[section]
\newtheorem{lemma}[theorem]{Lemma}
\newtheorem{corollary}[theorem]{Corollary}
\theoremstyle{definition}
\newtheorem{definition}[theorem]{Definition}
\newtheorem{assumption}[theorem]{Assumption}
\newtheorem{example}[theorem]{Example}
\theoremstyle{remark}
\newtheorem{remark}[theorem]{Remark}
\newcommand{\ARex}{ARex\xspace}
\newcommand{\ARexes}{ARexes\xspace}
\newcommand{\ourprocedure}{Joint-Opt\xspace}
\title{Explanation Design in Strategic Learning: Sufficient Explanations that Induce Non-harmful Responses}
\author{ 
Kiet Q. H. Vo\textsuperscript{1}\thanks{Author is also associated with the Saarland University, Saarbr\"ucken, Germany.} ,\hspace{2mm}
Siu Lun Chau\textsuperscript{1},\hspace{2mm}
Masahiro Kato\textsuperscript{2},\hspace{2mm}
Yixin Wang\textsuperscript{3},\hspace{2mm}
Krikamol Muandet\textsuperscript{1}
\\ \\
\textsuperscript{1}CISPA Helmholtz Center for Information Security, Saarbr\"ucken, Germany\\
\textsuperscript{2}Mizuho–DL Financial Technology, Co., Ltd., Tokyo, Japan\\
\textsuperscript{3}University of Michigan, Ann Arbor, MI, USA
}
\begin{document}

\maketitle

\begin{abstract}
We study explanation design in algorithmic decision making with strategic agents---individuals who may modify their inputs in response to explanations of a decision maker's (DM's) predictive model. As the demand for transparent algorithmic systems continues to grow, most prior work assumes full model disclosure as the default solution. In practice, however, DMs such as financial institutions typically disclose only partial model information via explanations. Such partial disclosure can lead agents to misinterpret the model and take actions that unknowingly harm their utility. A key open question is how DMs can communicate explanations in a way that avoids harming strategic agents, while still supporting their own decision-making goals, e.g., minimising predictive error. In this work, we analyse well-known explanation methods, and establish a necessary condition to prevent explanations from misleading agents into self-harming actions. Moreover, with a conditional homogeneity assumption, we prove that \textit{action recommendation-based explanations} (\ARexes) are sufficient for non-harmful responses, mirroring the revelation principle in information design. To demonstrate how \ARexes can be operationalised in practice, we propose a simple learning procedure that jointly optimises the predictive model and explanation policy. Experiments on synthetic and real-world tasks show that \ARexes allow the DM to optimise their model's predictive performance while preserving agents' utility, offering a more refined strategy for safe and effective partial model disclosure.
\end{abstract}

\keywords{Explainable ML \and Strategic Learning }

\section{Introduction}
Modern regulatory frameworks emphasise transparency in algorithmic decision making, mandating that decision makers (DMs) provide clear and understandable justifications for automated decisions~\citep{selbst2017meaningful,wachter2017right}. For example, the General Data Protection Regulation (GDPR)~\citep{gdpr} includes provisions commonly referred to as the \textit{right to explanation}, which require DMs to inform agents (i.e., individuals affected by automated decisions) about the basis of these decisions in a comprehensible manner~\citep{goodman2017european}. These provisions aim to help agents understand and potentially contest the rationale behind algorithmic decisions.
However, transparency can incentivise agents to manipulate their inputs to secure more favorable outcomes, triggering strategic adaptations by both agents and DMs~\citep{hardt2016strategic}. This dynamic has inspired extensive research into modeling strategic behavior and optimising decision making under such interactions~\citep{miller2020strategic}. 
Within this strategic learning domain, explainability is often interpreted as a requirement to fully disclose the decision making model, including its structure and parameters~\citep{shavit2020causal,harris2022strategic,vo2024causal}. This perspective assumes that full disclosure of the predictive model inherently satisfies the need for transparency, enabling agents to simulate and assess different scenarios using the disclosed information.

Although such full disclosure may satisfy transparency requirements, this may cause DMs to adopt models that prioritise interpretability~(e.g. linear models) at the expense of predictive performance. In complex models with billions of parameters, such transparency does not necessarily provide actual interpretability for agents and can instead overwhelm them with excessive information. This raises important questions about whether full disclosure truly aligns with the original intent of the right to explanation. Moreover, full disclosure may conflict with the interests of DMs, e.g., when dealing with sensitive intellectual property. 
For instance, in car insurance pricing, an insurance company that invests significant resources into developing a state-of-the-art pricing model could face substantial risks if full disclosure is mandated. Competitors could exploit the proprietary information without incurring the same development costs, ultimately undermining the company's competitive edge.

\begin{figure}[t!]
    \centering
    \includegraphics[width=0.99\columnwidth]{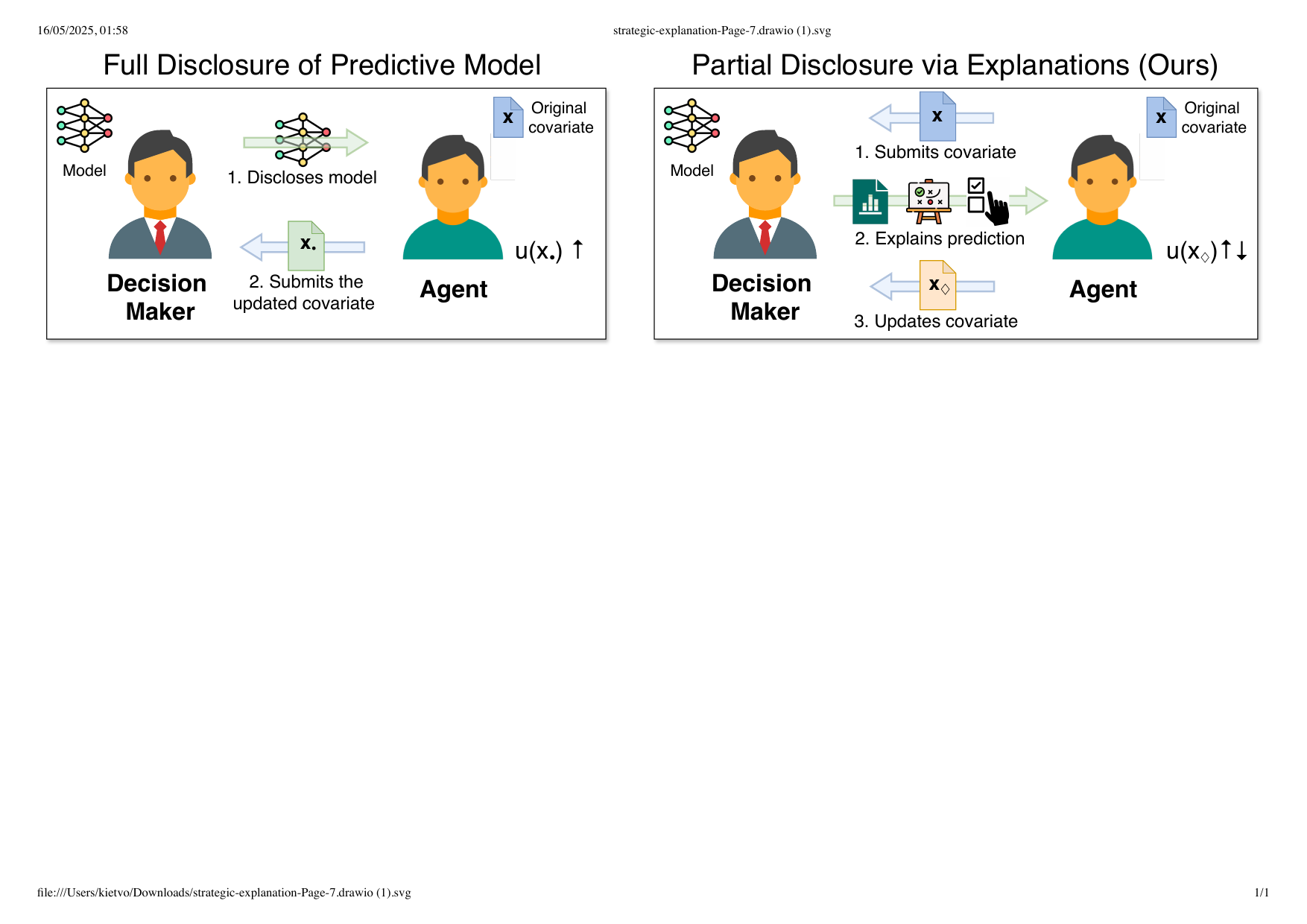}
    \caption{(left) With full access to the DM's model, the agent with a covariate $x$ can \textit{correctly} anticipate how changes affect predictions and choose a response $x_\bullet$ that reliably improves utility $u(x_\bullet)$. (right) With only an explanation, the agent's response $x_\diamond$, based on partial information, might not improve utility $u(x_\diamond)$.}
    \label{fig:enter-label}
\end{figure}

In practice, explaining a DM's predictive model does not necessitate full disclosure. A variety of explanation methods in machine learning focus on conveying partial information~\citep{molnar2020interpretable,christoph2020interpretable}.
Although many explanation methods have been examined in the context of strategic learning~\citep{tsirtsis2020decisions, xie2024non, cohen2024bayesian}, it remains unclear which approach would best serve DMs.
Specifically, when explanations omit certain details of the underlying model, agents' responses to this incomplete information may result in suboptimal or even detrimental outcomes for them. 
This is because popular explanation methods are primarily designed to report certain model characteristics rather than guiding agents' strategic responses~\citep{lundberg2017unified,tsirtsis2020decisions, chau2022rkhs, chau2023explaining}.
If explanations mislead agents into taking harmful actions, the DM risks losing the trust of agents, who may perceive the system as unfair or unreliable.
This raises a new challenge in designing explanations for strategic agents: identifying a class of explanations that avoids harming agents while remaining effective for the DM across a range of decision-making objectives. Drawing an analogy to the field of information design~(c.f. Section~\ref{sec: related work}), we refer to this emerging problem as \emph{Explanation Design}.
As a result, our work addresses the following question:

\emph{When full disclosure of the predictive model is neither feasible nor desirable, what kind of explanations should be communicated to ensure transparency while balancing the interests of all parties involved?}

\textbf{Our contributions.} Our contributions are primarily theoretical and can be summarised as follows.
Firstly, we formally show that, unlike agents with full access to the predictive model, those who rely solely on explanations may misinterpret the model and overestimate the utility of their strategic actions. This misalignment can mislead agents into taking actions that reduce their utility.
Secondly, under standard assumptions on agents’ behavior, we derive a necessary condition (\Cref{theorem:surrogate-necessary,corollary:general-necessary}) for explanations to avoid misleading agents into taking utility-harming actions.  In contrast to explanations that require such condition, we show that counterfactual explanations~\citep{wachter2017counterfactual}, by design, can prevent such misalignment (\Cref{remark:ar-noharm}).
Thirdly, our theoretical contributions culminate in \Cref{subsec:action-rec}, where we formalise the class of \textit{action recommendation-based explanations} (\ARexes, singular: \ARex), which generalise counterfactual explanations. We prove that, under an assumption on conditional homogeneity of agents' responses, this explanation class is sufficient (\Cref{theorem:ar-sufficiency}), thus offering a principled framework for designing safe explanations in strategic learning.
Finally, we demonstrate the practical relevance of \ARexes in two optimisation scenarios, with a synthetic and real-world datasets, where the DM jointly optimises the predictive model and \ARex policy. These experiments show that \ARexes enable the DM to improve predictive performance while preserving agents' utility.

All proofs are provided in \Cref{apx:proofs}.

\section{Problem formulation}\label{sec:problem-formulation}

\textbf{Notations.} Random variables are denoted with uppercase letters (e.g., $X$) and their realisations with lowercase letters (e.g., $x$). The index set $\{1,\ldots,T\}$ is denoted as $[T]$. 
We use $\mathcal{X}, \mathcal{Y}$, and $\mathcal{Z}$ to denote the spaces of agents' observed covariates, outcomes, and unobservable, respectively. The model class and explanation space are respectively denoted by $\mathcal{G}$ and $\mathcal{E}$. We use double dot to denote the initial value of agent $t$'s variable (e.g., $\basewt{x}$), as opposed to the value after it has been shifted, due to strategic behaviour (e.g., $x_t$). 

We use the car insurance pricing \citep{shavit2020causal} as our running example. We consider the scenario in which a DM, such as an insurer, interacts with a population of agents (customers), indexed by the integer $t$. Agents interact with the DM separately and independently. For simplicity, we describe the setup for a single agent.
Let $(\base{X},Z)$ be random variables jointly distributed as $P_{\base{X},Z}$. For each agent $t$, let $(\basewt{x},z_t)$ denote an independent realisation of $(\base{X},Z)$. Here, $\basewt{x}\in\mathcal{X}$ represents agent's observable features (e.g., driving records, car's model and features), while $z_t\in\mathcal{Z}$ captures unobservable factors (e.g., socioeconomic factors).
Furthermore, the agent has an unrealised outcome $\basewt{y}:=h(\basewt{x}, z_t)$, where $h:\mathcal{X}\times\mathcal{Z}\to\mathcal{Y}$ is a deterministic potential outcome function and $\basewt{y}\in\mathcal{Y}\subseteq\mathbb{R}$. In our car insurance context, $\basewt{y}$ reflects the future accident cost of this customer, which the DM tries to predict to determine the insurance premium. 

In the beginning, the DM selects a predictive model $g$ from the hypothesis class $\mathcal{G}\subseteq\{g^\prime:\mathcal{X}\to\mathcal{Y}\}$, to approximate $h$, e.g., by learning $g$ from historical data. The agent first submits their base covariate $\basewt{x}$, receiving a preliminary prediction $\hat{\base{y}}_t:=g(\basewt{x})$. In addition, the DM provides an explanation $e_t\in\mathcal{E}$ describing how $\hat{\base{y}}_t$ was computed. In practice, this means a customer submits an insurance application, receives a quoted premium, and is shown an explanation of how that premium was determined.
In this work, we formalise explanations and associated concepts as follows.

\begin{definition}\label{def:explanation-method}
    An \textit{explanation method} is a tuple $(\mathcal{E}, \sigma)$ where $\mathcal{E}$ is the space of feasible explanations and $\sigma:\mathcal{X}\times\mathcal{G}\to\mathcal{E}$ is an \textit{explanation policy} that picks an \textit{explanation} $e\in\mathcal{E}$ for the agent with base covariate $x\in\mathcal{X}$ w.r.t. the predictive model $g\in\mathcal{G}$. The explanation $e$ is a global explanation if $\sigma$ is a constant function w.r.t. the input $x$. Otherwise, $\sigma$ is said to generate local explanations.
\end{definition}

This definition allows us to incorporate a wide range of local and global explanation methods and analyse their impact on guiding agent behavior, as we explore in \Cref{sec:agent-responses}. The explanation space $\mathcal{E}$ is general and depends on the chosen explanation method. For example, when using a global surrogate model, such as a linear function, to approximate the predictive model $g$~\citep{molnar2020interpretable}, 
the explanation space $\mathcal{E}$ is a subset of linear functions $\mathcal{F}=\{f:\mathcal{X}\to\mathcal{Y}\,:\,f(x)=w^\top x+b\}$. Alternatively, attribution-based methods such as SHAP~\citep{lundberg2017unified} can have $\mathcal{E}\subseteq\mathbb{R}^d$ where each explanation $e$ is a $d$-dimensional vector of feature attributions. Several other explanation methods fit within this framework, and we provide a more comprehensive discussion of them in \Cref{apx:explanation-methods}.

\textbf{Agent's reaction.} 
After receiving an explanation $e_t$, the agent may seek to improve their predictive score by strategically changing their input features before resubmitting~\citep{tsirtsis2020decisions,xie2024non,karimi2021algorithmic,harris2022bayesian}. For example, an insurance customer might take actions such as installing a telematics device or upgrading to a safer vehicle in order to lower their predicted risk and premium when reapplying for a contract.
These actions often come with tangible costs such as time, money, or effort. The agent must weigh the benefit of a more favourable prediction against the cost of implementing changes. 
Following standard practice in strategic learning~\citep{hardt2016strategic,shavit2020causal,harris2022strategic,vo2024causal}, we model this trade-off with an additive utility function $u_t:\mathcal{G}\times\mathcal{X}\to\mathbb{R}$:
\begin{align}\label{eq:ag-true-utility}
u_t(g,x) := b(g, x) - c_t(\basewt{x},x) := (-g(x)) - c_t(\basewt{x},x),
\end{align}
where $b(g,x)$ denotes the potential benefit associated with the prediction $g(x)$, while $c_t$ captures the cost of changing features from $\basewt{x}$ to $x$. In our example, the agent prefers lower prediction scores, hence $b(g,x):=-g(x)$. However, the framework straightforwardly generalises to settings where higher scores are desirable, e.g., credit scoring.

We model heterogeneity in agents’ cost functions by introducing a random function, i.e., function-valued random variable, $C\sim P_C$, drawn from a distribution $P_C$ over a family of cost functions. We assume that $C$ may be statistically correlated with $\base{X}$ and $Z$. For example, an insurance customer's ability to improve their features, such as upgrading their vehicle, may depend on personal characteristics such as income or socioeconomic background, captured in their features $(\basewt{x}, z_t)$. Each agent’s cost function $c_t$ is a realisation of $C$ and is unknown to the DM.
We further define the cost function $c_t$ as follows:

\begin{definition}[Cost function]\label{def:cost-function}
A function $c_t:\mathcal{X}\times\mathcal{X}\to[0,\infty)$ is a cost function for agent $t$ if it satisfies
$c_t(\basewt{x}, \basewt{x}) = 0$ and 
$c_t(\basewt{x}, x) > 0$ for all $x\neq \basewt{x}$.
\end{definition}

In classical strategic learning (e.g., \citep{hardt2016strategic}), an agent with full knowledge of $g$ would choose a \textit{best response} $x$ that maximises $u_t(g,x)$. However, since agents do not observe $g$, they rely on the provided explanation $e_t$ to guide their behavior.
Upon receiving their explanation $e_t$, agent $t$ strategically modifies their covariate from $\basewt{x}$ to a new profile $x_t$. Since agents may respond differently to different types of explanations, we define a general reaction model: $x_t:=\psi(\basewt{x},e_t,z_t,c_t)$ where $\psi$ is a deterministic and measurable function. 
After reporting $x_t$, the agent receives the final prediction $\hat{y}_t:=g(x_t)$ and realises the outcome $y_t:=h(x_t,z_t)$, concluding the round.

\textbf{DM's objective.} In standard strategic learning setups, the DM is often assumed to minimise prediction error while accounting for how agents may manipulate their input features (e.g., \citep{hardt2016strategic, levanon2021strategic}). Other formulations instead assume that the DM aims to maximise agents’ outcomes or welfare \citep{xie2024non,vo2024causal}. In contrast, our work focuses on the problem of designing explanations in the presence of strategic agents, and therefore, is agnostic to the DM's objective. Instead, our theoretical results are stated independently of the DM’s goal. We introduce example objectives in \Cref{sec:experiments} to illustrate how a DM could apply our framework in practice.

\section{Agents' responses under explanations}\label{sec:agent-responses}

The agent does not have direct access to the predictive model $g$, and instead bases their strategic response $x_t$ on the provided explanation $e_t$. As a result, their best response may be suboptimal, potentially leading to a reduction in their true utility, which is defined in \Cref{eq:ag-true-utility}. To formalise desirable agent's behaviour, we introduce the following key concept:


\begin{definition}[Non-harmful responses]\label{def:no-harm}
Let $\nu_t=\left\{x\in\mathcal{X}: u_t(g,x)\geq u_t\big(g,\basewt{x}\big)\right\}$. An agent's response, $x_\bullet$, is a non-harmful response if $x_\bullet \in \nu_t$.
\end{definition}

Our goal is to characterise explanation methods in terms of their impact on agent behavior, identify conditions under which harmful responses may occur, and develop safeguards to prevent such outcomes.
We focus on explanation types that either (i) have been previously studied in strategic learning, or (ii) are accompanied by a clear behavioral model specifying how agents react. In particular, we emphasise actionable explanations that enable agents to improve their prediction outcomes, rather than merely offering post-hoc interpretability. 
Many popular explanation methods (e.g., feature attribution techniques such as SHAP) lack such actionable guidance \citep{molnar2020interpretable}, making agent behavior difficult to model and predict. We illustrate this issue with a toy example in \Cref{example:shapley}.


\subsection{Surrogate models}

We begin by analysing the use of surrogate models as explanations, e.g., Taylor expansions~\citep{xie2024non}, in strategic learning. These provide interpretable approximations of the predictive model $g$, enabling the agent to construct surrogate utility function and plan their responses accordingly.
In this subsection, we will derive a necessary condition that ensures such explanations do not induce harmful responses.


Let $f_t:\mathcal{X}\to\mathcal{Y}$ be a surrogate model of $g$. When the agent observes only $f_t$, it is natural to assume that they act to maximise the surrogate utility: 
$u_t(f_t, x) = (-f_t(x))-c_t(\basewt{x},x)$ by interpreting $-f_t(x)$ as a proxy for benefit instead of $-g(x)$. This assumption is standard when agents lack access to $g$ (e.g., \citep{jagadeesan2021alternative,ghalme2021strategic,bechavod2022information,xie2024non}).
Hence, their best response becomes
$x_t := \arg\max_{x}u_t(f_t,x)$. Since the surrogate utility $u_t(f_t,\cdot)$ differs from the true utility $u_t(g,\cdot)$, $x_t$ may reduce the agent’s true utility. To mitigate this risk, we establish a necessary condition:


\begin{theorem}[Necessary condition]\label{theorem:surrogate-necessary}
Given an agent $t$ with the base covariate $\basewt{x}$ who best responds against the surrogate utility function $u_t(f_t,\cdot)$. If it holds, for every possible cost function $c_t$ (\Cref{def:cost-function}), that the resulting best response $x_t$ belongs to the non-harmful set $\nu_t$ (\Cref{def:no-harm}), i.e.,
$u_t(g,x_t)\geq u_t(g,\basewt{x})$,
then the following also holds:
\begin{align}\label{eq:surrogate-necessary-cond}
f_t\big(\basewt{x}\big) - f_t(x) \leq g\big(\basewt{x}\big) - g(x) \quad \forall x \in \mathcal{X}_t^{g{\downarrow}},
\end{align}
with $\mathcal{X}_t^{g{\downarrow}}:=\{x : g(x) < g(\basewt{x})\}$ the set of potential responses with lower 
scores for the agent.
\end{theorem}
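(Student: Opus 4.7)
My plan is to argue by contraposition: assume that the conclusion (\ref{eq:surrogate-necessary-cond}) fails at some point, and exhibit an explicit cost function $c_t$ such that the agent's surrogate best response lands strictly outside $\nu_t$. Concretely, suppose there exists $x^\star\in\mathcal{X}_t^{g\downarrow}$ with $f_t(\basewt{x})-f_t(x^\star) > g(\basewt{x})-g(x^\star) > 0$. I can then pick any $\gamma$ in the (nonempty) open interval
\[
g(\basewt{x})-g(x^\star)\ <\ \gamma\ <\ f_t(\basewt{x})-f_t(x^\star).
\]

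\textbf{Cost construction and best-response verification.} I define a cost function by $c_t(\basewt{x},\basewt{x})=0$, $c_t(\basewt{x},x^\star)=\gamma$, and $c_t(\basewt{x},x)=M$ for every other $x\in\mathcal{X}$, where $M>0$ is a constant chosen large enough that $M > \sup_{x\neq\basewt{x},x^\star}\bigl[f_t(x^\star)-f_t(x)\bigr]+\gamma$ (taking this supremum to be finite under the standing regularity that makes $\arg\max$ well defined; otherwise one may inflate $M$ locally using a coercive tail in $c_t$). By \Cref{def:cost-function}, this $c_t$ is admissible. A direct comparison of the surrogate utility gives $u_t(f_t,x^\star)=-f_t(x^\star)-\gamma > -f_t(\basewt{x}) = u_t(f_t,\basewt{x})$ by the upper bound on $\gamma$, and $u_t(f_t,x^\star) \geq u_t(f_t,x)$ for all other $x$ by the choice of $M$. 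Hence the agent's best response is $x_t=x^\star$.

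\textbf{Harm computation.} Using $c_t(\basewt{x},\basewt{x})=0$ and $c_t(\basewt{x},x^\star)=\gamma$, the true utility gap evaluates to
\[
u_t(g,x^\star)-u_t(g,\basewt{x}) \;=\; \bigl(g(\basewt{x})-g(x^\star)\bigr)-\gamma \;<\;0
\]
by the lower bound on $\gamma$. Therefore $x_t=x^\star\notin\nu_t$, contradicting the hypothesis that the agent's best response always lies in the non-harmful set. This contradiction shows that (\ref{eq:surrogate-necessary-cond}) must hold.

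\textbf{Main obstacle.} The conceptual core of the argument is already captured by choosing $\gamma$ in the open interval above; the only delicate point is ensuring that $x^\star$ is the \emph{global} maximiser of $u_t(f_t,\cdot)$, rather than merely beating $\basewt{x}$. I expect the main source of friction to be handling this technicality in full generality, since if $f_t$ is unbounded below then no finite plateau value $M$ dominates every alternative. In the standard strategic learning setting where best responses exist, taking $M$ large is enough; more generally, one can replace the flat-plateau cost with a cost that grows sufficiently fast away from $\{\basewt{x},x^\star\}$ to dominate $-f_t$, which keeps the same interval $(\,g(\basewt{x})-g(x^\star),\,f_t(\basewt{x})-f_t(x^\star)\,)$ for $\gamma$ and yields the same contradiction.
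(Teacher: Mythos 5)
Your proof is correct and follows essentially the same route as the paper: a contrapositive argument that exhibits an adversarial cost function making the surrogate best response land outside $\nu_t$, with the cost at the witness point $x^\star$ chosen in the open interval $\bigl(g(\basewt{x})-g(x^\star),\, f_t(\basewt{x})-f_t(x^\star)\bigr)$. The one place where the paper's write-up is slicker concerns exactly the obstacle you flag: rather than forcing $x^\star$ to be the \emph{global} maximiser via a large plateau $M$ (which fails if $f_t$ is unbounded below), the paper (i) only enforces domination of $x_\bullet$ over $\mathcal{X}_t^{g\downarrow}\cup\{\basewt{x}\}$, using a cost of the form $c_t(\basewt{x},x_\bullet)+\bigl(f_t(x_\bullet)-f_t(x)\bigr)+\bigl|f_t(x_\bullet)-f_t(x)\bigr|$ that automatically tracks $-f_t$, and (ii) observes that any best response $x_\square$ falling outside that set satisfies $g(x_\square)\geq g(\basewt{x})$ and incurs strictly positive cost, hence is harmful regardless — so no control over the cost there is needed at all. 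This case split makes your coercive-tail patch unnecessary, though your patch is also valid since \Cref{def:cost-function} imposes nothing beyond positivity away from $\basewt{x}$.
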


This theorem says that the surrogate $f_t$ should not overstate the agent’s potential gain relative to the true model $g$. If \Cref{eq:surrogate-necessary-cond} is violated, there exists a cost function $c_t$ under which the agent is incentivised to choose a response $x_t$ whose cost outweighs the actual gain $g(x_t)$, thereby reducing their true utility. Hence, \Cref{eq:surrogate-necessary-cond} is a necessary safeguard. A toy example in \Cref{apx:misled-agent-example} illustrates how an agent can be misled into taking an overly costly action.




In conclusion, when using surrogate models as explanations, the DM must ensure these do not exaggerate agents' perceived gains. Many popular explanation methods — such as LIME \citep{ribeiro2016should}, SHAP \citep{lundberg2017unified}, and Taylor expansions \citep{xie2024non} — do not satisfy this condition by design.
Moreover, many \textit{noisy} agent reaction models in strategic learning \citep{rosenfeld2020predictions,jagadeesan2021alternative,bechavod2022information} can be interpreted as agents responding to some surrogate function $f_t$. Our necessary condition thus extends to those settings as well, offering guidance on designing communication strategies to ensure non-harmful responses for agents.
\Cref{theorem:surrogate-necessary} also extends to broader agent models where agents form beliefs about $g$ based on explanations, a setup that is considered in the concurrent work by \citet{cohen2024bayesian}.



\begin{corollary}[Necessary condition]\label{corollary:general-necessary}
For any agent with a tuple $(\basewt{x}, c_t, z_t)$, suppose that 
\begin{enumerate}[label=(\arabic*)]
    \item $\theta_t\in\Theta\subset\mathbb{R}^d$ is a random variable distributed according to the agent's prior $p(\theta_t)$ over the unknown parameter $\theta_0$ of the true predictive model $g_{\theta_0}$,
    \item $p(\theta_t|e_t)\propto p(e_t|\theta_t)p(\theta_t)$ is the posterior belief of this agent upon receiving the explanation $e_t$,
    \item $f_t$ represents the agent's updated belief about $g_{\theta_0}$ defined as $f_t(x) := \mathbb{E}_{\theta_t}[g_{\theta_t}(x)\ \lvert\ e_t],  \forall x\in\mathcal{X}$.
\end{enumerate}
Then, the result in \Cref{theorem:surrogate-necessary} extends to this setting.
\end{corollary}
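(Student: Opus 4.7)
The plan is to reduce the Bayesian setting of Corollary~\ref{corollary:general-necessary} to the surrogate-model setting of Theorem~\ref{theorem:surrogate-necessary} by showing that the agent's optimal response under posterior beliefs coincides with a best response against a particular surrogate, namely the posterior mean function $f_t$ defined in the statement.

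\textbf{Step 1: specify the agent's decision rule.} A rational Bayesian agent with posterior $p(\theta_t \mid e_t)$ and additive utility selects $x_t$ by maximising expected utility under the posterior,
\begin{equation*}
x_t \in \arg\max_{x\in\mathcal{X}} \mathbb{E}_{\theta_t \sim p(\cdot \mid e_t)}\bigl[u_t(g_{\theta_t}, x)\bigr].
\end{equation*}
This is the standard behavioural model for an agent with uncertainty only over the model parameter.

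\textbf{Step 2: reduce to a surrogate best response.} Since $u_t(g_{\theta_t}, x) = -g_{\theta_t}(x) - c_t(\basewt{x}, x)$ and the cost $c_t$ does not depend on $\theta_t$, linearity of expectation gives
\begin{equation*}
\mathbb{E}_{\theta_t}\bigl[u_t(g_{\theta_t}, x) \bigm| e_t\bigr] = -\mathbb{E}_{\theta_t}\bigl[g_{\theta_t}(x) \bigm| e_t\bigr] - c_t(\basewt{x}, x) = -f_t(x) - c_t(\basewt{x}, x) = u_t(f_t, x).
\end{equation*}
Therefore the Bayesian agent's best response coincides with a best response to the surrogate utility $u_t(f_t, \cdot)$, with $f_t$ exactly the posterior mean function from hypothesis~(3).

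\textbf{Step 3: invoke Theorem~\ref{theorem:surrogate-necessary}.} The hypothesis of the corollary assumes that for every admissible cost function $c_t$ the resulting $x_t$ lies in $\nu_t$. Together with Step~2, this is precisely the hypothesis of Theorem~\ref{theorem:surrogate-necessary} applied to the pair $(f_t, g_{\theta_0})$. The theorem then yields the same necessary inequality
\begin{equation*}
f_t(\basewt{x}) - f_t(x) \;\le\; g_{\theta_0}(\basewt{x}) - g_{\theta_0}(x) \quad \forall x\in\mathcal{X}_t^{g\downarrow},
\end{equation*}
with the true predictive model $g_{\theta_0}$ playing the role of $g$.

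The main obstacle is conceptual rather than technical: one must justify that the agent is expected-utility maximising, as opposed to, for instance, adopting a maximin rule over the posterior support or a plug-in at the posterior mode. Once expected utility is adopted, which is the standard modelling choice under posterior beliefs, the proof collapses to a one-line reduction, because the posterior mean $f_t$ is precisely the linear functional of the posterior that captures the agent's effective objective, and Theorem~\ref{theorem:surrogate-necessary} applies verbatim.
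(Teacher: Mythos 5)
Your proof is correct and follows the route the paper intends: the corollary's hypothesis (3) already defines $f_t$ as the posterior mean of $g_{\theta_t}$, and your Step~2 (linearity of expectation, using that $c_t$ is independent of $\theta_t$) is exactly the observation that makes the reduction to Theorem~\ref{theorem:surrogate-necessary} rigorous. The paper leaves this argument implicit, so your write-up simply spells out the intended one-line reduction.
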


\subsection{Action recommendation-based explanations}\label{subsec:action-rec}

We consider explanations of the form $e_t=(\rec{x}_t, \hat{\rec{y}}_t)$ where  $\rec{x}_t\in\mathcal{X}$ denotes a recommended covariate update suggested by the DM and $\hat{\rec{y}}_t:=g(\rec{x}_t)$ corresponds to the predicted outcome if the agent follows this recommendation. We refer to an explanation of this type as an \textit{action recommendation-based explanation} (\ARex, plural: \ARexes).
This class of explanations is desirable because, as we show later, it is sufficient for inducing agents' non-harmful responses.

Formally, the explanation policy is a mapping $\sigma:\mathcal{X}\times\mathcal{G}\to\mathcal{X}\times\mathcal{Y}$. A common design choice for $\sigma$ is to recommend a minimal feature modification that achieves a desired prediction, often studied as counterfactual explanations in explainable ML literature~\citep{molnar2020interpretable}. Throughout, we use \ARex as a general term for an explanation of the form $(\rec{x}_t, \hat{\rec{y}}_t)$ regardless of how it is generated. When the explanation is produced by a specific instantiation of $\sigma$ consistent with the counterfactual explanations literature (e.g., \citet{wachter2017counterfactual}), we refer to it as a counterfactual explanation. This distinction allows us to analyse \ARexes broadly without committing to any particular design of $\sigma$, such as those that minimise feature modifications in counterfactual explanations \citep{molnar2020interpretable} or those that provide \textit{causally plausible} changes in algorithmic recourse \citep{karimi2022survey}.


Compared to the previous subsection where the agent has to infer feature updates based on a surrogate model $f_t$, an \ARex explicitly recommends an action $\rec{x}_t$ and reveals its predicted outcome $\hat{\rec{y}}_t$. Thus, the agent no longer needs to infer the predictive model $g$ or speculate about alternative feature changes.
Precisely, we follow \citet{tsirtsis2020decisions} and assume that the agent chooses between keeping $\basewt{x}$ and adopting $\rec{x}_t$ by comparing their utilities:
\begin{align}\label{eq:ar-agent-model}
x_t := 
\rec{x}_t \; \text{if}\; u_t\big(g,\rec{x}_t\big)\geq u_t\big(g,\basewt{x}\big),
\; \text{else}\; \basewt{x}.
\end{align}
That is, the agent adopts the recommendation $\rec{x}_t$ if it improves their utility relative to staying with $\basewt{x}$. Since the DM discloses both $\hat{\base{y}}_t$ and $\hat{\rec{y}}_t$, the agent can evaluate and compare the two utility values directly.
The `$\geq$' sign allows the agent to break ties in favor of the recommendation $\rec{x}_t$, reflecting the fact that in many application domains, obtaining a more favorable prediction $\hat{y}$ is typically associated with better long-term outcomes (e.g., improved financial status, or lower accident risk in insurance pricing).
With \ARexes, the agent's best response can never harm their true utility:

\begin{remark}\label{remark:ar-noharm}
For an agent $t$, any \ARex policy 
$\sigma$
will induce a best response $x_\bullet$ that belongs to the set of this agent's non-harmful actions $\nu_t$. 
This is because $x_\bullet$ is either $\basewt{x}$ or $\rec{x}_t$.
\end{remark}

Next, to arrive at the sufficiency property of \ARexes, we introduce the following assumption.

\begin{assumption}[Conditional homogeneity of agents' responses]\label{assumption:subhomo-response}
Given an arbitrary explanation method characterised by $(\mathcal{E},\sigma)$, for any subset of $T^\prime$ agents who share the same base covariate and receive the same explanation, i.e., $(\basewt{x}, e_t)=(\base{x},e)$ for all $t\in[T^\prime]$, their responses must be identical: $x_t=x_\bullet$ for all $t\in[T^\prime]$ and for some $x_\bullet\in\mathcal{X}$.
\end{assumption}

Although \Cref{assumption:subhomo-response} appears strong at first glance, it is in fact weaker than the standard premise in information design~\citep{bergemann2019information} and strategic learning with Bayesian persuasion~\citep{harris2022bayesian} where a decision maker is expected to know how agents will react so that their action recommendation policy induces obedience. Precisely, instead of requiring full knowledge of the agents' reaction model, we only assume that the DM possesses ``\textit{just enough}" information---captured by $\basewt{x}$---such that, conditional on this information, the unobserved parts of agents that influence their responses are homogeneous. Thus, this assumption does not restrict the heterogeneity of agents in any way; rather, it requires the DM to obtain sufficient knowledge to conditionally align agents' responses.



In practice, this assumption can be enforced by allowing the DM to query additional information about agents when generating explanations. For instance, a car insurer might present a short survey asking whether a customer would prefer completing a defensive driving course or installing a telematics device. Using the collected information, the insurer can recommend a concrete action aligned with the customer's preferences, helping them obtain a lower premium.



\begin{theorem}[Sufficiency of \ARexes]\label{theorem:ar-sufficiency}
For a subset of $T^\prime$ agents with the same base covariate $\basewt{x}=\base{x}, \forall t\in[T^\prime]$, let $\nu_t$ be the set of non-harmful responses of agent $t$ (\Cref{def:no-harm}).
If the agents' responses are conditionally homogeneous (\Cref{assumption:subhomo-response}), then all $T^\prime$ agents must have the same set of non-harmful responses, i.e., $\nu_t=\nu, \forall t\in[T^\prime]$. Moreover, for any explanation $e$ generated by an arbitrary explanation method $(\mathcal{E}, \sigma)$, i.e., $\sigma:(\base{x},g)\mapsto e$, such that $e$ induces a response $x_\bullet\in\nu$, there exists an \ARex method $(\mathcal{E}^\prime,\sigma^\prime)$ such that $\sigma^\prime:(\base{x},g)\mapsto(\rec{x},\hat{\rec{y}})$ induces the same response $x_\bullet$.
\end{theorem}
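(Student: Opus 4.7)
The plan is to prove the two claims in sequence.

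For the first claim (identity of non-harmful sets across agents sharing $\base{x}$), I would argue by contradiction. Suppose two agents $t_1, t_2 \in [T^\prime]$ satisfied $\nu_{t_1} \neq \nu_{t_2}$, and pick $x^* \in \nu_{t_1} \setminus \nu_{t_2}$ (after swapping indices if necessary). Because $u_t(g, \base{x}) \geq u_t(g, \base{x})$ trivially, the point $\base{x}$ lies in every $\nu_t$, so $x^* \neq \base{x}$. Now instantiate the \ARex method that maps $(\base{x}, g)$ to the explanation $e = (x^*, g(x^*))$ (and is defined arbitrarily elsewhere). Applying the \ARex reaction model in \Cref{eq:ar-agent-model}: agent $t_1$ adopts $\rec{x} = x^*$ because $u_{t_1}(g, x^*) \geq u_{t_1}(g, \base{x})$, while agent $t_2$ retains $\base{x}$ because $u_{t_2}(g, x^*) < u_{t_2}(g, \base{x})$. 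Since $x^* \neq \base{x}$, the two realised responses differ, contradicting \Cref{assumption:subhomo-response} applied to this particular \ARex method. Hence $\nu_t$ is constant across $t \in [T^\prime]$; denote the common set by $\nu$.

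For the second claim (sufficiency), given any explanation method $(\mathcal{E}, \sigma)$ whose explanation $e = \sigma(\base{x}, g)$ induces the common response $x_\bullet \in \nu$, I would define the \ARex policy $\sigma^\prime(\base{x}, g) := (x_\bullet, g(x_\bullet))$ on the explanation space $\mathcal{E}^\prime := \mathcal{X} \times \mathcal{Y}$. By the first claim, $x_\bullet \in \nu_t$ for every $t$, i.e., $u_t(g, x_\bullet) \geq u_t(g, \base{x})$. Substituting this inequality into the \ARex reaction model in \Cref{eq:ar-agent-model} shows that each agent adopts $\rec{x} = x_\bullet$, matching the response induced by $\sigma$.

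The main conceptual obstacle lies in the first claim: the non-harmful set $\nu_t$ depends on the heterogeneous cost $c_t$, so it is not immediately obvious why a homogeneity assumption stated only on \emph{responses} would force these sets to coincide. The leverage comes from the fact that \Cref{assumption:subhomo-response} is asserted uniformly over \emph{every} explanation method, so one can stress-test it with a carefully chosen \ARex that exactly probes the indicator of $\nu_t$: whether agent $t$ adopts $x^*$ or stays at $\base{x}$ under the \ARex $(x^*, g(x^*))$ is precisely the test $x^* \in \nu_t$. Any mismatch across agents would then violate homogeneity, collapsing all $\nu_t$ to a single set $\nu$ and enabling the explicit \ARex construction above.
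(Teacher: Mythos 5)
Your proposal is correct and follows essentially the same route as the paper: both use the fact that the homogeneity assumption quantifies over all explanation methods (in particular over \ARexes, whose induced response exactly probes membership in $\nu_t$) to force the non-harmful sets to coincide, and both then construct the explicit \ARex $(\rec{x},\hat{\rec{y}})=(x_\bullet,g(x_\bullet))$ and invoke the reaction model in \Cref{eq:ar-agent-model} to conclude. Your contradiction argument for the first claim, with the explicit witness $x^*\in\nu_{t_1}\setminus\nu_{t_2}$ and the observation that $\base{x}\in\nu_t$ for all $t$, is in fact spelled out more carefully than the paper's rather terse version of that step.
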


This theorem implies that when evaluating the impact of a DM’s predictive model, it suffices to focus solely on \ARexes. To identify an optimal explanation method under the no-harm constraint (\Cref{def:no-harm}), it is sufficient to search within the class of \ARexes. Consequently, methods outside this class, e.g., LIME, cannot outperform optimal \ARexes under the no-harm requirement, regardless of the DM’s objective.
This sufficiency property of \ARexes is analogous to the sufficiency of Bayes correlated equilibria (BCE)\footnote{This result of \citet{bergemann2019information} generalises the idea of \textit{straightforward signal} from Bayesian persuasion~\citep{kamenica2011bayesian} to the multi-agent environment, such signal is also called Bayesian incentive-compatible~\citep{harris2022bayesian}.} in information design \citep{bergemann2019information}.
However, a key distinction is that BCE imposes a stronger condition than our \ARexes: a BCE can be interpreted as an \ARex policy that additionally satisfies an obedience-inducing constraint. Moreover, while the class of BCE suffices to rationalise any agent behavior, the class of \ARexes in our strategic learning setup suffices to rationalise any \textit{non-harmful} agent behavior.

To summarise, \ARexes are theoretically desirable because by design, they prevent agents from being misled into taking harmful actions and furthermore, under conditional homogeneity, they form a sufficient class of explanations. 

\textbf{\ARexes in practice.} 
While our theory holds regardless of the DM's objective, the DM can further optimally design $\sigma$ within the \ARex class to better serve specific goals, e.g., improving predictive accuracy. This can be done by jointly optimising both the predictive model $g$ and the \ARex policy $\sigma$, rather than relying on fixed designs as in standard counterfactual explanation methods \citep{molnar2020interpretable}. This allows the DM to tailor $\sigma$ to the task at hand. We illustrate this through two concrete scenarios next.


\section{Empirical studies}\label{sec:experiments}

We empirically evaluate two key properties of \ARexes: (i) their no-harm guarantee (\Cref{remark:ar-noharm}), and (ii) their practical value when optimised for specific objectives. \Cref{subsec:exp-noharm} validates the no-harm guarantee by comparing \ARexes against Taylor expansions, a representative surrogate method. \Cref{subsec:exp-synthetic} and \Cref{subsec:exp-german-credit} then show how the DM can improve predictive performance by jointly optimising $g$ and $\sigma$.
We briefly describe the setups here and leave full details to \Cref{apx:detailed-exp}.


\subsection{On the no-harm guarantee of \ARexes} \label{subsec:exp-noharm}
With a synthetic experiment, we demonstrate that \ARexes guarantee no harmful agents' responses while Taylor expansions, used as surrogates, do not.
We choose this baseline because (i) it has a clear reaction model~\citep{xie2024non}, unlike other explanation methods, and (ii) agents' responses can be computed exactly, allowing for precise experimental result.
We use a quartic function as the predictive model of the DM where $g(x)=x^4-x^2+1$ and use 2nd-order Taylor expansions as explanations (Taylor-ex). We generate a simple dataset of 100 agents with a scalar feature $\basewt{x}\in\mathbb{R}$ and use the cost function $c_t(\basewt{x},x)=|\alpha_t|\|\basewt{x}-x\|_2^2$, with $\alpha_t\in\mathbb{R}$ being the cost factor, reflecting heterogeneity in agents. We compare the result of Taylor-ex against \ARexes, which we generate randomly, for simplicity.

\textbf{Results.}
\Cref{subfig:taylor-harm} shows the box plot of the change in agents' utility values before and after performing best responses. With Taylor-ex, $49\%$ of agents have reduced utility values after best responding. That is, even though Taylor expansions can approximate local structures of $g$, they may exaggerate the agents' gains and thus mislead them into taking costly actions. On the other hand, with \ARexes, agents cannot be misled, even if the recommended actions are generated arbitrarily.

\begin{figure*}[t]
    \centering
    \begin{subfigure}[b]{0.31\textwidth}
        \includegraphics[width=0.99\textwidth]
        {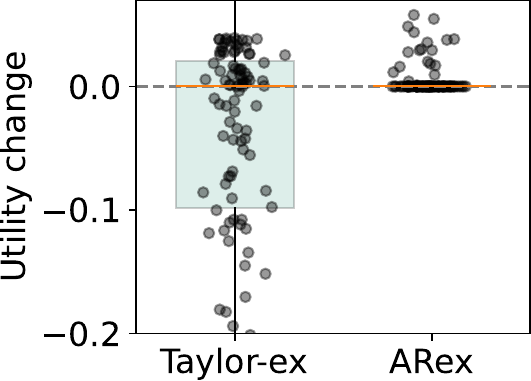}
        \subcaption{Agents' utility change.}
        \label{subfig:taylor-harm}
    \end{subfigure}
    \hfill
    \begin{subfigure}[b]{0.32\textwidth}
        \includegraphics[width=0.99\textwidth]
        {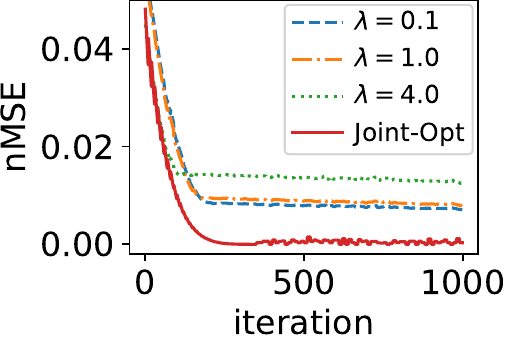}
        \subcaption{Synthetic data.}
        \label{subfig:synthetic-plot}
    \end{subfigure}
    \hfill
    \begin{subfigure}[b]{0.31\textwidth}
        \includegraphics[width=0.99\textwidth]
        {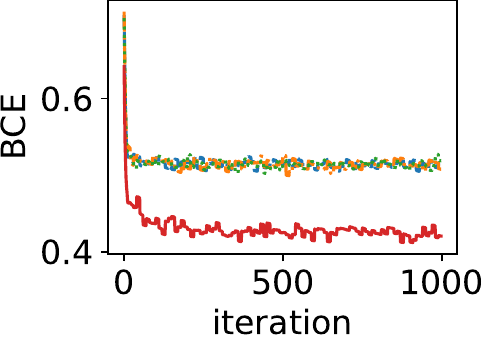}
        \subcaption{German credit data.}
        \label{subfig:german-credit-plot}
    \end{subfigure}
    \caption{
        (\subref{subfig:taylor-harm}) Taylor-ex mislead agents into reducing their utility while \ARexes do not. The box plot shows the change in agents' utility after best responding.
        (\subref{subfig:synthetic-plot}) $\&$ (\subref{subfig:german-credit-plot}): \ourprocedure has the lowest training-loss curves (nMSE and BCE) against the three baselines, showing that jointly optimising both $g$ and $\sigma$ is more beneficial to the DM. 
    }
    \label{fig:result}
\end{figure*}

\subsection{Operationalising \ARexes: A synthetic experiment}\label{subsec:exp-synthetic}

Next, we study a synthetic insurance pricing task designed to closely mirror the setup described in \Cref{sec:problem-formulation}. The DM aims to choose the predictive model $g$ and the \ARex policy $\sigma$ that jointly minimise the expected prediction errors. We refer to our approach as \textit{joint-optimisation} (\ourprocedure).\footnote{We provide an example algorithm in \Cref{apx:detailed-exp}. Our learning procedure extends beyond the classification and discrete setting in \citet{tsirtsis2020decisions}, aiming for broader applicability.}


We construct a dataset with agents of a 3-dimensional (observable) feature vector $\basewt{x}\in\mathbb{R}^3$ and a scalar (unobservable) feature $z_t\in\mathbb{R}$. Each agent $t$ has the cost function $c_t(\basewt{x},x)=|\alpha_t|\|\basewt{x}-x\|_2^2$ where $\alpha_t$ is correlated with $\basewt{x}$ and $z_t$. The outcome function $h:\mathbb{R}^4\to\mathbb{R}$ is a quadratic function of the concatenated features $(x_t,z_t)$.
Here, the DM's goal is to jointly learn $g$ and $\sigma$ that jointly minimise the expected squared loss:
$\min_{g,\sigma}\mathbb{E}_{P_{X,Z}}[(g(X)-h(X,Z))^2]$
where the DM's choice of $\{g,\sigma\}$ affects the distribution of agents' responses $P_X$.\footnote{For brevity, we omit the explicit connection between an agent's response $X$ and the optimisation arguments from the objective function, though a full expansion is provided in \Cref{subapx:expanded-obj}.} This objective reflects a typical goal in insurance pricing: setting premiums that align with future accident costs. Underpricing may lead to financial losses, while overpricing could drive customers away. However, jointly optimising $\{g,\sigma\}$  is challenging due to their interdependence (see \Cref{def:explanation-method}). To address this, observe that for any given pair $\{g,\sigma\}$, there exists an equivalent pair of $\{g,\sigma^r\}$ that could be employed to generate the same \ARexes, where $\sigma^r:\basewt{x}\mapsto \rec{x}_t$ is an \textit{action recommendation function}. This allows us to rewrite the loss as a function of $(g,\sigma^r)$, yielding the equivalent objective:
\begin{align} \label{eq:dm-objective-with-ar}
\min_{g,\sigma^r}\ \mathbb{E}_{P_{X,Z}}\left[\Big(g(X)-h(X,Z)\Big)^2\right] .
\end{align}
In practice, because the DM does not have access to the outcome function $h$ and the unobserved variable $z_t$, an efficient way to solve \Cref{eq:dm-objective-with-ar} is through the repeated risk minimisation (RRM) procedure~\citep{perdomo2020performative}. While training $g$ in RRM is straightforward, the same does not apply to $\sigma^r$ because training only works if $g(x_t)$ changes whenever $\sigma^r$ is updated. This means the DM must simulate how agents will adapt their responses $x_t$ to changes in $\sigma^r$. 
As part of the learning procedure, the DM can deploy some (possibly arbitrary) models $\{g,\sigma\}$ to obtain agents' responses, then learn a model $\hat{\psi}: (\basewt{x}, \hat{\base{y}}_t, \rec{x}_t, \hat{\rec{y}}_t)\mapsto \hat{x}_t$ that predicts agents' responses, similar to the work of \citet{xie2024non}.


\textbf{Repeated risk minimisation.} Putting everything together, with finite samples, the empirical objective in the $i$-th iteration of the RRM procedure is
$(g_{i},\sigma_{i}^r)=\arg\min_{g,\sigma^r}(1/T_i)\sum_{t\in[T_i]}(g(\hat{x}_t)-y_t)^2$. The notation $\hat{x}_t$ refers to the simulated agent's response based on the recommended action $\rec{x}_t:=\sigma^r_i(\basewt{x})$ and the inferred reaction model $\hat{\psi}$. In addition, $\{y_t\}_{t\in[T_i]}$ are the outcomes of $T_i$ agents collected from when the previous models $(g_{i-1},\sigma^{r}_{i-1})$ are deployed, as usually done in RRM~\citep{perdomo2020performative}. 




\textbf{Baselines.} To construct counterfactual explanation (CE) baselines, we use a fixed CE policy with several values of the regularisation parameter $\lambda$. Like our main approach, we train the predictive model $g$ using RRM. However, unlike in our joint optimisation setup, the CE policy here remains fixed throughout training. Specifically, counterfactual explanations are generated as
\begin{align*}
\rec{x}_t := \arg\min_{x}\left(g(x) + \lambda\big\|x-\basewt{x}\big\|_2^2\right),
\quad
\hat{\rec{y}}_t := g(\rec{x}_t).
\end{align*}
Since the CE policy is fixed and RRM is already used, there is no need to simulate agents' responses here. Thus, the  objective of this approach in each iteration of RRM is $g_{i}=\min_{g}\sum_{t\in[T_{i}]}(g(x_t)-y_t)^2$ where the dataset $\{x_t,y_t\}_{t\in[T_i]}$ is collected from when the previous model $g_{i-1}$ is deployed.



\textbf{Evaluation.} We then compare the prediction errors between all approaches on a hold-out test set of $10^6$ strategic agents. For ease of presentation, we scale the mean-squared errors by dividing them with the constant $nc=\frac{1}{T}\sum_{t\in[T]}\basewt{y}$ that is independent of the DM's choice of models. We refer to the scaled errors as normalised mean-squared errors (nMSE). If the loss is computed on offline data, i.e., without agents' strategic responses, we simply refer to it as nMSE, otherwise, \emph{strategic} nMSE.

\begin{table}
    \centering
    \caption{\ourprocedure achieves the lowest test error (on synthetic data) and highest test score (on real-world data), while maintaining strong compliance. The compliance rates indicate the portions of agents that follow the DM's recommended actions.}
    \label{tab:result-on-test}
    \begin{tabular}{llcccc}
    \toprule
    & & {\ourprocedure} 
      & $\lambda=0.1$ 
      & $\lambda=1.0$ 
      & $\lambda=4.0$ \\
    \midrule
    \multirow{2}{*}{Synthetic data} 
      & Strategic nMSE ($\downarrow$)     & \textbf{2e-4} & 7e-3 & 8e-3 & 1e-2 \\
    & Compliance rate     & 1.0 & 1.0  & 1.0  & 1.0   \\
    \midrule
    \multirow{2}{*}{German credit data} 
      & Strategic $F_1$ score ($\uparrow$)     & \textbf{0.9} & 0.86 & 0.84 & 0.85   \\
    & Compliance rate     & 0.84 & 1.0 & 1.0 & 1.0   \\
    \bottomrule
    \end{tabular}
\end{table}

\textbf{Results.}
\Cref{subfig:synthetic-plot} shows the training loss (nMSE) under RRM and \Cref{tab:result-on-test} reports test performance. Although all methods optimise the predictive model $g$ while accounting for agents' strategic behaviour via RRM, our results show that the choice of explanation policy $\sigma$ plays a critical role. Specifically, varying $\lambda$ in fixed CE policies already impacts the predictive performance of $g$, and jointly optimising $\sigma$ with $g$ yields further improvement. This highlights the benefit of learning a non-harmful explanation policy tailored to the DM's objective rather than relying on fixed designs.

Although not the main objective, \ourprocedure also achieves full compliance in this synthetic setup. This arises because the \ARex policy $\sigma$, through optimisation, learns to guide agents towards regions where $g$ performs well, thus becoming effective at inducing obedience in agents. While this notion of compliance is based on simulation and does not reflect real human behaviour, it suggests that optimising explanations with strategic dynamics in mind can be effective. Evaluating this effect in user studies or behavioural experiments is a promising direction for future work.


\subsection{German credit dataset}\label{subsec:exp-german-credit}

We use the German credit dataset \citep{statlog_(german_credit_data)_144} and adopt several details from \citet{xie2024non} to preprocess the data and to simulate strategic behaviour. In particular, we remove sensitive features and designate 8 out of 18 features as modifiable for agents.
We fit a logistic regression model on the original dataset to estimate $y_t$, and later use this model to simulate respective outcomes when agents modify their covariates $x_t$. We use CTGAN \citep{ctgan} to generate 9,000 more samples for training and 1,000 for testing. 

\textbf{DM-agents interactions.} The DM's predictive model is a binary classifier $g(x):=\mathbbm{1}[g_s(x)\geq0.5]$ where the underlying scoring $g_s(x)\in[0,1]$ outputs the predicted probability of a positive outcome. Agents aim to maximise $g_s(x)$, yielding utility $u_t(g_s,x)$.
We design the cost function as
$c(\basewt{x},x) := 0.01\sum_{i\in\mathcal{I}}\left|\base{x}_{ti}-x_{i}\right|/(x_{i}^U-x_{i}^L)$,
where $\mathcal{I}$ is the set of modifiable features and $[x_{i}^L, x_{i}^U]$ denotes the valid range of feature $i$. Any change in a non-modifiable feature incurs infinite cost.

\textbf{Evaluation.} 
Counterfactual explanations are generated similarly to the synthetic setup, with additional constraints (e.g., categorical/bounded features) enforced via projected gradient descent. Unlike previous setup, we now use $-g_s(x)$ in the objective function for generating CEs as the agents now benefit from higher prediction scores.
The training of $g$ follows the same procedure as before, except for the loss functions: we use \textit{weighted} binary cross-entropy (BCE) to learn agents' response function $\xi$ (with imbalanced labels), and \textit{unweighted} BCE in RRM, where outcomes $y_t$ may shift due to strategic behavior.
We evaluate predictive accuracy (using the $F_1$ score) on the hold-out test set of 1,000 strategic agents, referring to this as \textit{strategic $F_1$ score} to reflect possible shifts in $(x_t,y_t)$.

\textbf{Results.}
\Cref{subfig:german-credit-plot} shows the training loss (BCE) under RRM, and \Cref{tab:result-on-test} reports test performance. These results support our findings in the previous synthetic setup: joint optimisation of $g$ and $\sigma$ yields the most favourable outcome for the DM, while maintaining a reasonably high compliance rate under simulated strategic behaviour.
\section{Related work}
\label{sec: related work}
In this section, we overview related work and provide further details in \Cref{apx:extended-related-work}.


\textbf{Strategic learning.} Several studies have examined the effects of sharing partial information about predictive models with agents and analysed how agents make decisions based on this information~\citep{jagadeesan2021alternative, ghalme2021strategic, bechavod2022information, harris2022bayesian, haghtalab2024calibrated, xie2024non, cohen2024bayesian}. 
In particular, the works of \citet{harris2022bayesian} and \citet{cohen2024bayesian} are closest to ours. As discussed throughout \Cref{subsec:action-rec}, \citet{harris2022bayesian} focuses on the obedience-inducing property (also known as the Bayesian incentive compatibility) of a subclass of action recommendations, whereas we focus on the no-harm property of action recommendations. 
Unlike action recommendations, \citet{cohen2024bayesian} instead releases a subset of the hypothesis class to all agents, aligning with global explanations in our framework (\Cref{sec:problem-formulation}). However, interpreting a set of models—such as neural networks—can be difficult for agents. In contrast, AR-based explanations are not only more interpretable but also provide guidance that cannot mislead agents.

\textbf{Counterfactual explanations and algorithmic recourse.} \citet{tsirtsis2020decisions,karimi2022survey} explore counterfactual explanations and algorithmic recourse, for strategic agents. Although algorithmic recourse focuses on recommending actions to achieve better outcomes, it actual implementation often requires strong causal assumptions. These assumptions can render it impractical in more general settings where such causal knowledge is not justified.
In contrast, our work adopts a weaker notion of desirability centred on agents' welfare---ensuring non-harmful responses---and examines a broader range of explanation types beyond counterfactuals.
Even though both \citet{tsirtsis2020decisions} and our work involve counterfactual explanations, the contributions differ. Precisely, they focus on optimising CEs in strategic settings, while we analyse multiple explanation types and formally show why \ARexes are more desirable. 
In addition, our proposed learning procedure in \Cref{subsec:exp-synthetic}, though not the main focus, is designed to be more general, extending beyond the classification and discrete case in \citet{tsirtsis2020decisions}.

\textbf{Information design.} The extensive literature on information design, as surveyed by \citet{bergemann2019information}, studies how to design information disclosure policies in a game of two parties. While our results are inspired by these works, e.g., \Cref{theorem:ar-sufficiency}, the goals differ significantly. As discussed in \Cref{subsec:action-rec}, information design aims at \textit{persuading} agents with a general response model and does not necessarily ensure the no-harm property (\Cref{def:no-harm}). In contrast, we study explanation methods that prioritise the no-harm property, ensuring agents' welfare is not compromised.
By incorporating specific agent models in strategic settings, we establish the sufficiency of AR-based explanations without requiring the DM to account for agents' heterogeneous reaction models.

\section{Conclusion}\label{sec:conclusion}

To summarise, we address the challenge of providing actionable and safe explanations in strategic learning scenarios where DMs must balance transparency with utility optimisation. We formalise the class of action recommendation (AR)-based explanations, which ensure that agents act without incurring detrimental outcomes. By introducing the no-harm property and a conditional homogeneity assumption, we demonstrate that AR-based explanations enable DMs to achieve optimal outcomes while safeguarding agent welfare. Consequently, our work clarifies the distinctions of different explanation methods through the lens of strategic learning. Last but not least, we propose a framework to jointly optimise predictive models and non-harmful explanation policies, aligning the DM’s objectives with agents' best responses. 

Our findings rest on commonly adopted assumptions about agents' behavior, such as their utility functions or reaction models. While these assumptions may limit the generalisability of our approach, they do not diminish its broader relevance. Intuitively, when explanations omit certain information, conditions are necessary to ensure that agents’ inferred gains are not exaggerated, maintaining realistic and actionable guidance. AR-based explanations succeed in this regard by focusing exclusively on actionable recommendations that agents can safely choose without fear of being misled, thereby ensuring both predictive accuracy for the DM and safety for the agents. 
Future work could explore extensions to diverse agent behavior models, dynamic environments, and more scalable learning algorithms to enhance the applicability and efficiency of this approach.

\section*{Acknowledgments} 
We sincerely thank the members of the Rational Intelligence (RI) Lab, including Abbavaram Gowtham Reddy, Anurag Singh, and Swathi Suhas, for their insightful discussions, constructive feedback, and invaluable contributions to this work. We also extend our gratitude to the visiting researchers, Amin Charusaie, Majid Mohammadi, Masaki Adachi, Rattaya Kaewvichai, and Saptarshi Saha, for their stimulating discussions and fresh perspectives, which enriched our understanding of the problem. Their contributions have been greatly appreciated.

Yixin Wang was supported in part by the Office of Naval Research under grant number N00014-23-1-2590, the National Science Foundation under Grant No. 2231174, No. 2310831, No. 2428059, No. 2435696, No. 2440954, and a Michigan Institute for Data Science Propelling Original Data Science (PODS) grant.

\bibliographystyle{plainnat}
\bibliography{references} 

\newpage
\appendix

\section{Additional illustrative examples}
This section contains more examples to illustrate our theory.

\subsection{Examples of local and global explanations}\label{apx:explanation-methods}

Here, we provide some specific examples for global and local explanations that fit into our setting (\Cref{def:explanation-method}):

\begin{itemize}
    \item Global surrogate models such as linear models or decision trees that approximate the DM's predictive model $g$ \citep{molnar2020interpretable}. In this scenario, a \textit{constant} explanation $e\in\mathcal{E}$, regardless of $\basewt{x}$, is some surrogate model $f:\mathcal{X}\to\mathcal{Y}$ for the true model $g:\mathcal{X}\to\mathcal{Y}$. The explanation space $\mathcal{E}$ is a subset of linear models or decision tree models: $\mathcal{F}=\{f^\prime:\mathcal{X}\to\mathcal{Y}\}$.
    
    \item Partial descriptions of the DM's predictive model \citep{cohen2024bayesian}. In this scenario, a \textit{constant} explanation $e\in\mathcal{E}$, regardless of $\basewt{x}$, is a \textit{partial} description (of $g$) that corresponds to some subset of the hypothesis space $\mathcal{G}_S\subseteq\mathcal{G}$ such that $g\in\mathcal{G}_S$. This partial description narrows down the agents’ uncertainty about $g$ without fully revealing $g$.
    
    \item Feature attribution-based explanation methods that assign importance scores to features such as SHAP \citep{lundberg2017unified}. For example, when the covariate $\basewt{x}\in\mathcal{X}\subseteq\mathbb{R}^d$ is a vector of $d$ features, an explanation $e_t=(e_{t1}, \ldots, e_{td})\in\mathcal{E}\subseteq\mathbb{R}^d$ is a vector containing the importance scores of each features in $\basewt{x}$.
    
    \item Local surrogate models such as Taylor expansions \citep{xie2024non}. An explanation $e_t$ is some function $f_t:\mathcal{X}\to\mathcal{Y}$ that approximates $g$ in a local neighbourhood of $\basewt{x}$ and the explanation space $\mathcal{E}$ is a subset of all such functions, e.g., $\mathcal{E}\subseteq\mathcal{F}=\{f^\prime\mid f:\mathcal{X}\to\mathcal{Y}\}$.
    
    \item Counterfactual explanations \citep{wachter2017counterfactual}. In this scenario, $e_t=(\rec{x}, g(\rec{x}))$ and $\mathcal{E}=\{(\rec{x}, g(\rec{x})):g(\rec{x})<\hat{\base{y}}\}$, where $\rec{x}$ denotes the recommended covariate for the agent to change to, in order to receive a more favourable prediction, i.e., $g(\rec{x})$ from the DM, e.g., lower insurance premium: $g(\rec{x})<\hat{\base{y}}$.
\end{itemize}

\subsection{A Shapley value example}\label{apx:shapley}
We provide a simple example using Shapley values to illustrate the point made in \Cref{sec:agent-responses}: many attribution methods fail to offer clear, actionable guidance for strategic agents. This limitation arises because Shapley values depend on the underlying data distribution and therefore may not reliably capture the behavior of the DM's predictive model $g$.

\begin{example}\label{example:shapley}
Consider the predictive model $g(x)=x_1-x_2^2$ for any $x=[x_1\quad x_2]^\top\in\mathbb{R}^2$, and an agent with the base feature vector $\base{x}=[16\quad 4]^\top$. With a single agent, we drop the subscript $t$ for simplicity. Furthermore, suppose that the 2 features $\base{X}_1, \base{X}_2$ are statistically independent and that $\base{X}_2\sim\mathcal{U}([2,5])$. The Shapley value of this agent's 2nd feature is
\begin{align*}
&\phi_2(g, \base{x}) 
\\
&= \frac{1}{2}\Big(g(\base{x}) - \mathbb{E}\left[g(\base{X}_1=16, \base{X}_2)\right]
\\
&\hspace{12mm} +\mathbb{E}\left[g(\base{X}_1, \base{X}_2=4)\right] - \mathbb{E}\left[g(\base{X}_1, \base{X}_2)\right]\Big)
\\
&= \frac{1}{2}\Big(0 - \mathbb{E}\left[16-\base{X}_2^2\right] + \mathbb{E}\left[\base{X}_1-16\right] - \mathbb{E}\left[\base{X}_1-\base{X}_2^2\right]\Big) 
\\
&= \frac{1}{2}\Big(-32+2\mathbb{E}\left[\base{X}_2^2\right]\Big) = -3 < 0
.
\end{align*}

However, if we consider $\base{X}_2\sim\mathcal{U}([2,8])$, then $\mathbb{E}\left[\base{X}_2^2\right]=28$ and the Shapley value for this 2nd feature is
\begin{align*}
\phi_2(g, \base{x}) = \frac{1}{2}\Big(-32+ 2\mathbb{E}\left[\base{X}_2^2\right]\Big)=12>0
.
\end{align*}

On the other hand, the partial derivative of the function $g$ at the point $x_2=\base{x}_2=4$ is
\begin{align*}
\frac{dg}{dx_2}(x_2=4) = -8,
\end{align*}
which implies that this agent can achieve a lower prediction score by increasing the value of their 2nd feature $\base{x}_2$. However, the sign of the Shapley value for this feature, as we have shown, can vary depending on the distribution of the data, as a result, this Shapley value cannot say how this agent should change their feature to obtain better prediction score.
\end{example}

\subsection{A misled agent}\label{apx:misled-agent-example}
We present a simple example showing how linear surrogate models cannot guarantee that the induced agent's responses are non-harmful (\Cref{def:no-harm}). This is because they do not satisfy the necessary condition in \Cref{theorem:surrogate-necessary}.

\begin{example}
Suppose that an insurance company uses the $g(x)=x^2$ to predict the risk of a customer whose feature takes on the base value $\base{x}=5$, which corresponds to the prediction $\hat{\base{y}}=25$. With this single-agent scenario, we drop the subscript $t$ for simplicity. Suppose further that this customer has the following cost function for changing their feature:
\begin{align*}
c(\base{x},x) := c(\Delta x) := 
\left\{\begin{aligned}
&3(\Delta x)^2 \quad \forall \Delta x\in (-\infty,-3]\\
&9|\Delta x| \hspace{6mm} \forall \Delta x\in[-3,0]\\
&>0 \hspace{8mm} \forall \Delta x\in(0,\infty),
\end{aligned}\right.
\end{align*}
where we use $\Delta x$ to denote $x - \base{x}$ for any $x\in\mathcal{X}$, and rewrite the cost function into $c(\Delta x)$ for simplicity.
The DM discloses a localised linear model $f(x)=10x-25$ tangent to $g(x)$ at the base value $\base{x}=5$. As the agent wants to minimise their predictive risk, their true utility function and surrogate utility function, to be maximised are respectively
\begin{align*}
u(g,x) &= -g(\base{x}+\Delta x) - c(\Delta x) 
= -(5+\Delta x)^2 - c(\Delta x),
\\
u(f,x) &= -f(\base{x}+\Delta x) - c(\Delta x)
= -10(5+\Delta x)+25 - c(\Delta x).
\end{align*}
Hence, $x=2$ (or equivalently $\Delta x=-3$) is the customer's best response as it maximises their surrogate utility function $u(f,x)$. However, this leads to a reduction in the true utility function, i.e., $u(g,2) < u(g,5)$, because the customer has paid a high cost only to achieve a small reduction in their prediction value.
\end{example}

\section{Proofs of the main results}\label{apx:proofs}
This section contains the derivations and proofs of our main results presented in \Cref{sec:agent-responses} and \Cref{sec:experiments}.

\subsection{DM's objective}\label{subapx:expanded-obj}
We expand the DM's original objective in \Cref{sec:experiments} to show how all the parameters affect the objective:
\begin{align*}
&\min_{g,\sigma}\mathbb{E}_{P_{X,Z}}\left[\big(g(X)-h(X,Z)\big)^2\right]
\\
=&\min_{g,\sigma}\mathbb{E}_{P_{\base{X},C,Z}}\left[\Big(g\big(\underbrace{\psi(\base{X},\sigma(\base{X},g),Z,C)}_{X}\big)-h\big(\underbrace{\psi(\base{X},\sigma(\base{X},g),Z,C)}_{X},Z\big)\Big)^2\right],
\end{align*}
where the random variable $X:=\psi(\base{X},\sigma(\base{X},g),Z,C)$ denotes the response of an agent and $\psi$ is the response function, as defined in \Cref{sec:problem-formulation}.




\subsection{Proof for \Cref{theorem:surrogate-necessary}}
We introduce the following lemma to help proving \Cref{theorem:surrogate-necessary}.
\begin{lemma}
If the necessary condition (i.e., \Cref{eq:surrogate-necessary-cond}) is violated, then there exist a value $x_\bullet \in \mathcal{X}_t^{g\downarrow}$ and a cost function $c_t$ satisfying the following three conditions:
\begin{align}\label{eq:violation-of-necessity}
\left\{\begin{aligned}
&0 < g(\basewt{x}) - g(x_\bullet) < c_t(\basewt{x}, x_\bullet),
\\
&c_t(\basewt{x}, x_\bullet) < f_t(\basewt{x}) - f_t(x_\bullet),
\\
&c_t(\basewt{x},x_\bullet) + \Big(f_t(x_\bullet)-f_t(x)\Big) < c_t(\basewt{x},x)\quad \forall x \in\mathcal{X}_t^{g\downarrow}\setminus\{x_\bullet\}.
\end{aligned}\right.
\end{align}
\end{lemma}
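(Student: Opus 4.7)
The plan is a direct constructive argument, exploiting the fact that the cost function is entirely under our design control subject only to the positivity/identity requirements of \Cref{def:cost-function}. First, I would extract a witness from the hypothesis: violation of \Cref{eq:surrogate-necessary-cond} means there exists $x_\bullet\in\mathcal{X}_t^{g\downarrow}$ such that $f_t(\basewt{x})-f_t(x_\bullet) > g(\basewt{x})-g(x_\bullet)$. Because $x_\bullet\in\mathcal{X}_t^{g\downarrow}$ the right-hand side is strictly positive, so the open interval $I := \bigl(g(\basewt{x})-g(x_\bullet),\ f_t(\basewt{x})-f_t(x_\bullet)\bigr)$ is non-empty. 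This single observation is what makes all three inequalities in \Cref{eq:violation-of-necessity} simultaneously achievable.

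Next, I would fix the cost at $x_\bullet$ by picking any $\kappa\in I$ and setting $c_t(\basewt{x},x_\bullet):=\kappa$; the first two lines of \Cref{eq:violation-of-necessity} are immediate by construction. For the third line, I would penalise every alternative descending response by defining, for each $x\in\mathcal{X}_t^{g\downarrow}\setminus\{x_\bullet\}$, a value $c_t(\basewt{x},x)$ that strictly exceeds $\kappa+f_t(x_\bullet)-f_t(x)$, for example $c_t(\basewt{x},x):=\max\bigl\{1,\ \kappa+f_t(x_\bullet)-f_t(x)+1\bigr\}$, which is simultaneously strictly positive and strictly larger than the required bound. The remaining inputs are handled trivially: set $c_t(\basewt{x},\basewt{x})=0$ and pick any strictly positive value (say $1$) for all other $x\in\mathcal{X}$. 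The resulting $c_t$ is a valid cost function per \Cref{def:cost-function}.

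The main obstacle, though minor, is exactly the interaction between the third displayed inequality and the positivity requirement: if $f_t(x_\bullet)-f_t(x)$ is very negative, a naive definition of $c_t(\basewt{x},x)$ could become non-positive. Taking a maximum with a positive constant, as above, resolves this cleanly while preserving the required strict inequality. Apart from this, the three conditions decouple across the domain---the first two constrain only the value at $x_\bullet$, while the third imposes only a lower bound at each of the other points---so no global consistency argument is needed, and the proof reduces to verifying the inequalities point by point.
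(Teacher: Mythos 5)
Your proof is correct and follows essentially the same route as the paper's: extract the witness $x_\bullet$ from the violated inequality, place $c_t(\basewt{x},x_\bullet)$ in the non-empty open interval $\bigl(g(\basewt{x})-g(x_\bullet),\,f_t(\basewt{x})-f_t(x_\bullet)\bigr)$, and assign every other point of $\mathcal{X}_t^{g\downarrow}$ a cost strictly exceeding the required lower bound, which is legitimate since \Cref{def:cost-function} imposes no regularity on $c_t$. Your $\max\{1,\ \kappa+f_t(x_\bullet)-f_t(x)+1\}$ construction is in fact slightly more careful than the paper's additive choice $c_t(\basewt{x},x_\bullet)+\bigl(f_t(x_\bullet)-f_t(x)\bigr)+\bigl|f_t(x_\bullet)-f_t(x)\bigr|$, which yields only a non-strict inequality in the edge case $f_t(x)=f_t(x_\bullet)$.
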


\begin{proof}

The first line of \Cref{eq:violation-of-necessity} says that the change in the prediction value is smaller than the cost for updating the agent's covariate. We have $0<g(\basewt{x})-g(x_\bullet)$ because of the definition of $\mathcal{X}_t^{g\downarrow}$, and $c_t(\basewt{x},x_\bullet)>0$ because of \Cref{def:cost-function}. Because these two definitions are unrelated, there exist infinitely many such pairs of $\{x_\bullet,c_t\}$.

The second line of \Cref{eq:violation-of-necessity} says that the cost for updating the agent's covariate is smaller than the change in the \textit{surrogate prediction} value (i.e., $f_t(\cdot)$). Because \Cref{eq:surrogate-necessary-cond} is violated, there exists $x_\bullet \in \mathcal{X}_t^{g\downarrow}$ such that $g(\basewt{x})-g(x_\bullet)<f_t(\basewt{x})-f_t(x_\bullet)$. Given such $x_\bullet$, there exists a cost function $c_t$ such that $g(\basewt{x})-g(x_\bullet) < c_t(\basewt{x},x_\bullet) < f_t(\basewt{x})-f_t(x_\bullet)$. This is because the cost function $c_t$ can be designed independent of $\{g,f_t,\basewt{x},x_\bullet\}$.

Before explaining the meaning of the third line of \Cref{eq:violation-of-necessity}, we show how a pair $\{x_\bullet,c_t\}$ can satisfy this condition. Given $\{x_\bullet, f_t\}$, one can design a cost function $c_t$ that satisfies the following, without violating the first two conditions of \Cref{eq:violation-of-necessity}:
\begin{align*}
\left\{\begin{aligned}
c_t(\basewt{x},\basewt{x}) &:= 0,
\\
c_t(\basewt{x},x) &:= c_t(\basewt{x},x_\bullet) + \Big(f_t(x_\bullet)-f_t(x)\Big) + \Big|f_t(x_\bullet)-f_t(x)\Big|
\quad 
\forall x \in\mathcal{X}_t^{g\downarrow}\setminus\{x_\bullet\},
\end{aligned}\right.
\end{align*}
where $c_t(\basewt{x},x_\bullet)>0$ and $\Big(f_t(x_\bullet)-f_t(x)\Big) + \Big|f_t(x_\bullet)-f_t(x)\Big| \geq 0$ for all $x \in\mathcal{X}_t^{g\downarrow}\setminus\{x_\bullet\}$. This holds because we impose no additional restrictions on $c_t$—such as smoothness—beyond those specified in \Cref{def:cost-function}. As a result, the design for $c_t(\basewt{x},x)$, for all $x\neq x_\bullet$, is not affected by the design for $c_t(\basewt{x},x_\bullet)$.

This concludes the proof for this lemma. We explain the third condition in \Cref{eq:violation-of-necessity} in the next proof.
\end{proof}

We now prove \Cref{theorem:surrogate-necessary}.

\begin{proof}[Proof of \Cref{theorem:surrogate-necessary}]
We prove this by contrapositive. Suppose that the necessary condition (\Cref{eq:surrogate-necessary-cond}) is violated, then there exist a value $x_\bullet \in \mathcal{X}_t^{g\downarrow}$ and a cost function $c_t$ satisfying \Cref{eq:violation-of-necessity}.

We explain the meaning of the third condition in \Cref{eq:violation-of-necessity}. Observe that this condition is equivalent to
\begin{align*}
\underbrace{-f_t(x_\bullet) - c_t(\basewt{x},x_\bullet)}_{u_t(f_t,x_\bullet)} > \underbrace{-f_t(x) - c_t(\basewt{x},x)}_{u_t(f_t,x)}\quad \forall x \in\mathcal{X}_t^{g\downarrow}\setminus\{x_\bullet\}.
\end{align*}

Then, using \Cref{eq:violation-of-necessity}, we can see that, if the best response of this agent $t$, against the local surrogate function $f_t$, lies in the set $\mathcal{X}_{t}^{g\downarrow}\cup\{\basewt{x}\}$, then $x_\bullet$ is the solution, since
\begin{align}
x_\bullet &:= \arg\min_{x\in\mathcal{X}_{t}^{g^\downarrow}\cup\{\basewt{x}\}} f_t(x) + c_t(\basewt{x},x)
\\
&:= \arg\max_{x\in\mathcal{X}_{t}^{g^\downarrow}\cup\{\basewt{x}\}} u_t(f_t,x)
.
\end{align}

Moreover, because $g(\basewt{x}) - g(x_\bullet) < c_t(\basewt{x},x_\bullet)$, as specified in the first condition of \Cref{eq:violation-of-necessity}, we have
\begin{align}
-\Big(g(x_\bullet) + c_t(\basewt{x},x_\bullet)\Big) &< -g(\basewt{x})
\\
\Rightarrow u_t(g,x_\bullet) &< u_t(g,\basewt{x}).
\end{align}
This means that $x_\bullet\not\in\nu_t$ (\Cref{def:no-harm}).

On the other hand, if the best response is some $x_\square\not\in(\mathcal{X}_t^{g^{-}}\cup\{\basewt{x}\})$, we have
\begin{align}
g(x_\square) + \underbrace{c_t(\basewt{x},x_\square)}_{>0} > g(x_\square) \geq g(\basewt{x})
,
\end{align}

which results in $u_t(g, x_\square) < u_t(g,\basewt{x})$. 

In either of both cases ($x_\bullet$ or $x_\square$), the agent's response does not belong to the non-harmful set $\nu_t$. Hence, by the contrapositive, ensuring that the agent's responses lie in $\nu_t$ requires that \Cref{eq:surrogate-necessary-cond} holds. This concludes the proof.
\end{proof}

\subsection{A sufficient condition to guarantee non-harmful responses}

\begin{theorem}[Sufficient condition]
Given a base covariate value $\basewt{x}\in\mathcal{X}$ and a surrogate model $f_t:\mathcal{X}\to\mathcal{Y}$, if it holds that
\begin{align*}
f_t\big(\basewt{x}\big) - f_t(x) \leq g\big(\basewt{x}\big) - g(x) \quad \forall x \in \mathcal{X},
\end{align*}
then, for any agent with the same base covariate $\basewt{x}$, their response $x_\bullet$, against the surrogate utility $u_t(f_t,\cdot)$, lies within their non-harmful set $\nu_t$.
\end{theorem}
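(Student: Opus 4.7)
The plan is to prove the statement via a direct three-line chain of inequalities, exploiting (i) optimality of $x_\bullet$ against the surrogate utility, (ii) the hypothesised pointwise bound relating $f_t$ and $g$, and (iii) the definitional fact $c_t(\basewt{x},\basewt{x})=0$ from \Cref{def:cost-function}. This is essentially the forward version of the contrapositive argument used in the proof of \Cref{theorem:surrogate-necessary}, but run in the easy direction.

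First, I would note that since $x_\bullet$ maximises $u_t(f_t,\cdot)$ and $\basewt{x}$ is always feasible, we have $u_t(f_t,x_\bullet) \geq u_t(f_t,\basewt{x})$. Expanding both sides and using $c_t(\basewt{x},\basewt{x})=0$, this rearranges to $f_t(\basewt{x}) - f_t(x_\bullet) \geq c_t(\basewt{x}, x_\bullet)$. Second, I would instantiate the hypothesised condition at $x = x_\bullet$ to obtain $g(\basewt{x}) - g(x_\bullet) \geq f_t(\basewt{x}) - f_t(x_\bullet)$, and chain the two inequalities to conclude $g(\basewt{x}) - g(x_\bullet) \geq c_t(\basewt{x}, x_\bullet)$. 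Third, rearranging this final inequality gives $-g(x_\bullet) - c_t(\basewt{x},x_\bullet) \geq -g(\basewt{x}) - c_t(\basewt{x},\basewt{x})$, i.e., $u_t(g,x_\bullet) \geq u_t(g,\basewt{x})$, so that $x_\bullet \in \nu_t$ by \Cref{def:no-harm}.

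I do not anticipate any substantive technical obstacle here, as each step is a one-line algebraic manipulation. The only conceptual point worth emphasising in the write-up is why the hypothesis must hold for every $x \in \mathcal{X}$, rather than being restricted to $\mathcal{X}_t^{g\downarrow}$ as in \Cref{theorem:surrogate-necessary}. The reason is that the argument above has no a priori control over which region of $\mathcal{X}$ the surrogate best response $x_\bullet$ falls into, so the bound must be available at an arbitrary point in order for step two to apply. The necessary condition could afford a restriction to $\mathcal{X}_t^{g\downarrow}$ precisely because its proof proceeded by an adversarial construction of a specific $c_t$; here we make no such construction and therefore require the inequality globally.
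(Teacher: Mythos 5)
Your proof is correct and follows exactly the same three-step chain of inequalities as the paper's own argument: surrogate-optimality of $x_\bullet$ against $\basewt{x}$, instantiating the hypothesis at $x_\bullet$, and rearranging to $u_t(g,x_\bullet)\geq u_t(g,\basewt{x})$. Your closing remark on why the bound must hold on all of $\mathcal{X}$ rather than $\mathcal{X}_t^{g\downarrow}$ is a sensible addition but does not change the argument.
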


\begin{proof}
For any arbitrary agent $t$ with the base covariate $\basewt{x}$ and the cost function $c_t$, suppose we have
\begin{align}\label{apx-eq:surrogate-sufficient-condition}
f_t\big(\basewt{x}\big) - f_t(x) \leq g\big(\basewt{x}\big) - g(x) \quad \forall x \in \mathcal{X}.
\end{align}

Let $x_\diamond\in\mathcal{X}$ denotes the agent's best response against the surrogate utility $u_t(f_t,\cdot)$, then
\begin{eqnarray*}
&u_t(f_t,x_\diamond) &\geq u_t(f_t,\basewt{x})
\\
&-f_t(x_\diamond) - c_t\big(\basewt{x}, x_\diamond\big) &\geq -f_t\big(\basewt{x}\big) -\underbrace{c_t(\basewt{x},\basewt{x})}_{=0}
\\
\Rightarrow &f_t\big(\basewt{x}\big) - f_t(x_\diamond) &\geq c_t\big(\basewt{x}, x_\diamond\big).
\end{eqnarray*}

Using the assumed condition in \Cref{apx-eq:surrogate-sufficient-condition}, we obtain:
\begin{eqnarray*}
&c_t\big(\basewt{x}, x_\diamond\big) &\leq f_t\big(\basewt{x}\big) - f_t(x_\diamond)
\leq g\big(\basewt{x}\big) - g(x_\diamond)
\\
\Rightarrow &c_t\big(\basewt{x}, x_\diamond\big) &\leq g\big(\basewt{x}\big) - g(x_\diamond)
\\
\Rightarrow &-g\big(\basewt{x}\big) -\underbrace{c_t(\basewt{x},\basewt{x})}_{=0} &\leq -g(x_\diamond) -c_t\big(\basewt{x}, x_\diamond\big)
\\
\Rightarrow &u_t(g,\basewt{x}) &\leq u_t(g,x_\diamond).
\end{eqnarray*}

Then $x_\diamond\in\nu_t$. This concludes the proof.
\end{proof}

\subsection{Proof for \Cref{theorem:ar-sufficiency}}
\begin{proof}[Proof]
Because these agents have the same response for any explanation (\Cref{assumption:subhomo-response}), they must have the same response for any AR-based explanation. Given that AR-based explanations always induce non-harmful responses (\Cref{remark:ar-noharm}), these agents will have the same set of non-harmful responses, i.e., $\nu_t=\nu$ for all $t\in[T^\prime]$. This proves the first result in \Cref{theorem:ar-sufficiency}.

Furthermore, for any arbitrary explanation method that induces a best response $x_\bullet\in\nu$ of these agents, it must hold that $u_t(g,x_\bullet)\geq u_t(g,\base{x})$ for all $t\in[T^\prime]$, because of \Cref{def:no-harm}.

Consequently, there exists an AR-based explanation method that provides the explanation $(\rec{x},\hat{\rec{y}})=(x_\bullet,g(x_\bullet))$ and by \Cref{eq:ar-agent-model}, all agents will follow the recommendation. This concludes the proof.
\end{proof}

\section{Additional details of the experiments}\label{apx:detailed-exp}
We provide here additional details to the setups for experiments in \Cref{sec:experiments}.

\subsection{On the no-harm guarantee of \ARexes}
We use a quartic function as the predictive model of the DM where $g(x)=x^4-x^2+1$ and use 2nd-order Taylor expansions as the baseline explanation method.
We generate a simple data set of 100 agents with 1-dimensional features as follows: 
\begin{align*}
\basewt{X} &\sim \mathcal{N}(0,0.4^2),\\
\alpha_t &\sim \mathcal{U}([1,1.2]),
\end{align*}
where $\alpha_t$ denotes the cost factor of agent $t$, which we use to model the heterogeneity of agents' cost functions.
The cost function for agent $t$ is $c_t(\basewt{x},x)=|\alpha_t|\|\basewt{x}-x\|_2^2$. For simplicity, we generate the AR-based explanations randomly as follows:
\begin{align*}
\rec{X}_t &\sim \mathcal{N}(0,0.4), \\
\hat{\rec{Y}}_t &:= g(\rec{X}_t).
\end{align*}

\textbf{Computational details.} This experiment took less than 5 seconds to run on a standard MacBook Pro with an M2 chip and 16GB of RAM.

\subsection{Operationalising \ARexes on synthetic data}
\paragraph{Synthetic data generation.} We construct a synthetic dataset containing agents of 3-dimensional (observable) feature vector $\basewt{x}\in\mathbb{R}^3$ and a scalar (unobservable) feature $z_t\in\mathbb{R}$ as follows:
\begin{align*}
Z_t &\sim \mathcal{U}(\{0,1,2,3\}),
\\
\alpha_t\mid z_t &\sim \mathcal{N}(0.02+0.1z_t,\ 0.01^2),
\\
\basewt{X}\mid z_t &\sim \mathcal{N}(\mathbf{m},\ I\times 2),
\end{align*}
where $\mathbf{m}:=[10+z_t,10+z_t,10+z_t]^\top$ and the cost function $c_t(\basewt{x},x)=|\alpha_t|\|\basewt{x}-x\|_2^2$.

\paragraph{Learning agents' responses.} As we mention in \Cref{subsec:exp-synthetic}, training $\sigma^r$ requires the DM's ability to simulate agents' responses. Let $\hat{\psi}: (\basewt{x}, \hat{\base{y}}_t, \rec{x}_t, \hat{\rec{y}}_t)\mapsto \hat{x}_t$ denote a model that the DM can use to predict agents' responses. We construct $\hat{\psi}$ by using an underlying model $\xi:\big(\basewt{x}, \rec{x}_t, \Delta \rec{g}_t\big)\mapsto \hat{w}_t$ that predicts an agent's compliance $w_t$. We define compliance $w_t$ as a binary variable where $w_t=1$ indicates the agent follows the recommendation $\rec{x}_t$, and $w_t=0$ otherwise. The term $\Delta \rec{g}_t:=g(\basewt{x})-g(\rec{x}_t)$ denotes the gain in prediction value for the agent $t$ and serves as a useful feature for this classifier, given the additive structure of utility in \Cref{eq:ag-true-utility}. 
Once $\xi$ is learned, the DM can simulate an agent's response as $\hat{x}_t := \hat{w}_t \rec{x}_t + (1-\hat{w}_t) \basewt{x}$.

To generate necessary data to learn the classifier $\xi$, the DM can employ a sampler $\pi$ to generate random \ARexes as follows: 
\begin{align*}
\rec{X}_t\mid \basewt{x} &\sim \pi(\rec{X}_t, \basewt{x}), \\
\hat{\rec{Y}}_t &:= g(\rec{X}_t).
\end{align*}

\paragraph{The \ourprocedure algorithm.} \Cref{algo:main} summarises the details of our \ourprocedure procedure and we explain the steps here. We use 3-layer ReLU network for constructing all three models $g$, $\sigma^r$, and $\xi$. To make the learning more efficient, we first pre-train the predictive model $g$ and the AR function $\sigma^r$ to obtain $g_0$ and $\sigma^r_0$. We use a dataset $D_1=\{x_t,y_t\}_{t\in[5000]}$ for this step.

Then, we interact with the next $10^6$ agents to collect another data set $D_2=\{\basewt{x}, \rec{x}_t, w_t, \Delta \rec{g}_t\}_{t\in[10^6]}$. The collected dataset will later be used to train the compliance predictor $\xi$. To do this, we construct a sampler $\pi$ to generate random \ARexes:
\begin{align*}
\rec{X}_t\mid \basewt{x} &\sim \mathcal{N}(x_t^\diamond,4), \\
\hat{\rec{Y}}_t &:= g_0(\rec{X}_t),
\end{align*}
where $x_t^\diamond$ is chosen arbitrarily between the following options:
\begin{align*}
x_t^\diamond &:= \basewt{x},\quad \text{or}\\
x_t^\diamond &:= \sigma^r_0\big(\basewt{x}\big),\quad \text{or}\\
x_t^\diamond &:= \arg\min_{x\in\mathcal{X}}\left(g_0(x) + \big\|x-\basewt{x}\big\|_2^2\right).
\end{align*}

Next, we run RRM over $m=100$ iterations, in each of which we deploy $g_{i-1}$ and $\sigma^r_{i-1}$ to interact with $10^4$ agents to collect a data set $\{x_t,y_t\}_{t\in[10^4]}$, where the subscript $i$ denotes an iteration, as outlined in \Cref{algo:main}. The models $g_i$ and $\sigma^r_i$ are sequentially updated over $100$ iterations as specified in \Cref{eq:rrm-update} to eventually obtain the optimal $g$ and $\sigma^r$. 

\begin{algorithm}[t!]
\caption{Joint optimisation of $g$ and $\sigma$.}
\label{algo:main}
\textbf{Require:} Dataset $D_1=\{x_t,y_t\}_{t\in[T]}$ and the sampler $\pi$.\\
\textbf{Parameters:} $T, T^\prime$, and $\{T_1,\ldots,T_m\}$.

\begin{algorithmic}[1]
\STATE Pre-train $g$ and $\sigma^r$ on $D_1=\{x_t,y_t\}_{t\in[T]}$ as follows:
\begin{align*}
g_0 &= \arg\min_g \sum_{x_t,y_t\in D_1} (g(x_t)-y_t)^2,
\\
\sigma^r_0 &= \arg\min_{\sigma^r} \sum_{x_t,y_t\in D_1} (\sigma^r(x_t)-x_t)^2.
\end{align*}
\STATE Interact with agents over $T^\prime$ rounds, with $g_0$ and a sampler $\pi$ to collect the dataset $D_2=\{\basewt{x}, \rec{x}_t, \Delta \rec{g}_t, w_t\}_{t\in[T^\prime]}$, then train the compliance predictor with the objective
\begin{align*}
\arg\min_{\xi} \sum_{t\in[T]}\Big(-w_t\log(\hat{w}_t)-(1-w_t)\log(1-\hat{w}_t)\Big),
\end{align*}
where $\hat{w}_t$ is the output of $\xi$ and $w_t$ is the actual label.
\FOR{$i\in\{1,\ldots,m\}$}
\STATE Interact with agents over $T_i$ rounds with $g_{i-1}$ and $\sigma^r_{i-1}$ to collect the dataset $D_{3,i}=\{\basewt{x},y_t\}_{t\in[T_i]}$
\STATE Update $(g_{i},\sigma^r_{i})$ by solving
\begin{align}\label{eq:rrm-update}
\big(g_{i},\sigma^{r}_{i}\big) := \arg\min_{g,\sigma^{r}}\sum_{t\in[T_i]}\left(g(\hat{x}_t)-y_t\right)^2,
\end{align}
where $\hat{x}_t$ is simulated with the compliance predictor $\xi$ and the input $\basewt{x}$.
\ENDFOR
\STATE Set $(g,\sigma^r):=(g_m,\sigma^r_m)$.
\end{algorithmic}
\end{algorithm}



\paragraph{Hyperparameter choices.} We report all key details necessary to understand the experimental results. Other lower-level settings (e.g., optimiser, learning rates, numbers of iterations, model sizes) are omitted from discussion as they do not affect our main conclusions. Our goal is to demonstrate the benefit of optimising the \ARex policy, rather than relying on fixed designs such as counterfactual explanations. Performing extensive hyperparameter tuning for the baselines would effectively optimise those fixed policies, which would only reinforce our central claim. However, we include complete source code with the supplementary material for reproducing all experimental results.

\textbf{Computational details.} This experiment was completed in approximately 11 minutes on a cloud machine with 44 vCPUs and 88GB of RAM.

\subsection{\ARexes on German credit dataset}
As mentioned in \Cref{subsec:exp-german-credit}, we adopt details from \citet{xie2024non} for pre-processing the data. Specifically, we remove two sensitive features (i.e., \textit{age} and \textit{sex}) and designate 8 out of the remaining 18 features as modifiable for strategic agents, these are: \textit{existing account status, credit history, credit amount, savings account, present employment, installment rate, guarantors}, and \textit{residence}. Categorical features are label encoded, for simplicity, and numerical features are standardised to have zero mean and unit variance. For each modifiable feature (indexed with $i$), we identify the range of feasible values and denote it as $[x_i^L, x_i^U]$. We use this to prevent the DM from recommending extreme feature changes to agents and to prevent agents from adopting such extreme feature modifications.

As mentioned in \Cref{subsec:exp-german-credit}, we design the cost function as
\begin{align*}
c(\basewt{x},x) := 0.01\sum_{i\in\mathcal{I}} \frac{\left|\base{x}_{ti}-x_{i}\right|}{(x_{i}^U-x_{i}^L)},
\end{align*}
where $\mathcal{I}$ is the index set of modifiable features. Any change in a non-modifiable feature incurs infinite cost. We use the weighted $L1$ distance instead of the quadratic form to avoid the cost values from becoming excessively small. Furthermore, the scaling factor of $0.01$ is heuristically chosen so that we have a nice balance between agents who can easily change their features and agents who cannot. This choice enables us to more clearly observe the impact of different designs for \ARex policies.

In this dataset, the agent's outcome $y_t$ is a binary variable indicating a customer's credit risk classification (i.e., $1$ means \textit{good} and $0$ means \textit{bad}). To simulate how an agent's outcome $y_t$ changes w.r.t. their strategically updated covariate $x_t$, we use a logistic regression model that is fitted on this dataset of 1,000 observations. Let $s:\mathcal{X}\to(0,1)$ be the resulting logistic function that outputs the probability of the outcome $y_t$ is positive. We simulate the agent's outcome as
\begin{align*}
Y_t\mid x_t \sim \text{Bernoulli}\big(s(x_t)\big).
\end{align*}

We fit CTGAN~\citep{ctgan} on the original dataset then generate additional samples: 9,000 for training and 1,000 for testing.

\paragraph{Hyperparameter choices.} Similar to the previous synthetic setup, we report all key details necessary to understand the experimental results. Other lower-level settings (e.g., optimiser, learning rates, numbers of iterations, model sizes) are omitted from discussion as they do not affect our main conclusions. We include complete source code with the supplementary material for reproducing all experimental results.

\textbf{Computational details.} This experiment was completed in approximately 4 minutes on a cloud machine with 44 vCPUs and 88GB of RAM.

\section{Related work (extended)}\label{apx:extended-related-work}

\paragraph{Strategic learning.} Strategic classification was introduced by \citet{bruckner2012static} and further developed by \citet{hardt2016strategic}, where they presented the first computationally efficient algorithms to learn near-optimal classifiers in strategic environments. Their key assumption was that agents have complete knowledge of the classifier due to information leakage, even when the system is designed to obscure the model. In contrast, our work weaken this assumption by considering scenarios where the learner (or DM) releases partial information about their model.


Subsequent research has expanded the field of strategic classification by developing more efficient algorithms \citep{dong2018strategic, levanon2021strategic, ahmadi2021strategic} or by incorporating new aspects such as social welfare \citep{hu2018disparate, milli2019social}, randomisation \citep{sundaram2023pac, ahmadi2023fundamental, shao2024strategic}, repeated interactions \citep{harris2021stateful, cohen2023sequential}, and incentivising improvements \citep{kleinberg2020classifiers,harris2022strategic,vo2024causal}. 
Another important thread considers settings where agents cannot best respond to the true model—either due to bounded rationality, limited information, or uncertainty in their response process (e.g., \citep{jagadeesan2021alternative,bechavod2022information,harris2022bayesian,xie2024learning}). Our work is complementary: instead of committing to a particular agent model, we focus on how the DM can structure the information disclosed through explanations, and we conduct a comparative analysis across explanation types to understand when they induce responses that do not harm agents.

In particular, the works of \citet{harris2022bayesian} and \citet{cohen2024bayesian} are closest to ours. As discussed throughout \Cref{subsec:action-rec}, \citet{harris2022bayesian} focuses on the obedience-inducing property (also known as the Bayesian incentive compatibility) of a subclass of action recommendations, whereas we focus on the no-harm property of action recommendations. 
As we also discuss in \Cref{subsec:action-rec}, identifying an AR-based explanation policy that can induce obedience for each \textit{individual} agent is hard, especially when agents are heterogeneous (e.g., in \citet{harris2022strategic,shao2024strategic}), and such \textit{individual}-level identification might not be necessary if the DM only cares about optimising their expected utility, which is computed over the population of agents.
Unlike action recommendations, \citet{cohen2024bayesian} instead releases a subset of the hypothesis class to all agents, aligning with global explanations in our framework (\Cref{sec:problem-formulation}). However, interpreting a set of models—such as neural networks—can be difficult for agents. In contrast, the AR-based explanations are not only more interpretable but also provide guidance that cannot mislead agents.

\paragraph{Counterfactual explanations and algorithmic recourse.} 
\citet{tsirtsis2020decisions,karimi2022survey} explore counterfactual explanations and algorithmic recourse, for strategic agents. Although algorithmic recourse focuses on recommending actions to achieve better outcomes, it actual implementation often requires strong causal assumptions. These assumptions can render it impractical in more general settings where such causal knowledge is not justified.
In contrast, our work adopts a weaker notion of desirability centred on agents' welfare---ensuring non-harmful responses---and examines a broader range of explanation types beyond counterfactuals.
Even though both \citet{tsirtsis2020decisions} and our work involve counterfactual explanations, the contributions differ. Precisely, they focus on optimising CEs in strategic settings, while we analyse multiple explanation types and formally show why \ARexes are more desirable. 
In addition, our proposed learning procedure in \Cref{subsec:exp-synthetic}, though not the main focus, is designed to be more general, extending beyond the classification and discrete case in \citet{tsirtsis2020decisions}.

\paragraph{Information design.} 
The extensive literature on information design, as surveyed by \citet{bergemann2019information}, studies how to design information disclosure policies in a game of two parties. While our results are inspired by these works, e.g., \Cref{theorem:ar-sufficiency}, the goals differ significantly. As discussed in \Cref{subsec:action-rec}, information design aims at \textit{persuading} agents with a general response model and does not necessarily ensure the no-harm property (\Cref{def:no-harm}). In contrast, we study explanation methods that prioritise the no-harm property, ensuring agents' welfare is not compromised.
By incorporating specific agent models in strategic settings, we establish the sufficiency of AR-based explanations without requiring the DM to account for agents' heterogeneous reaction models.

\end{document}